\documentclass[dvipsnames]{article}

\usepackage{colm2024_conference}

\PassOptionsToPackage{table,dvipsnames}{xcolor}
\usepackage[table,dvipsnames]{xcolor}
\definecolor{lightgray}{gray}{0.9}
\definecolor{quotecolor}{RGB}{70,70,70}
\definecolor{lightpurple}{RGB}{230,230,250}
\usepackage{graphicx}
\usepackage{booktabs}
\usepackage{array}
\usepackage{tabularx}
\usepackage{colortbl}
\usepackage{multirow}
\usepackage{makecell}
\usepackage{adjustbox}
\usepackage{wrapfig}
\usepackage{caption}
\usepackage{subcaption}
\usepackage[table,xcdraw]{xcolor}
\definecolor{gg}{HTML}{e2f0cb}
\definecolor{db}{RGB}{155, 189, 216}
\definecolor{think_blue}{RGB}{107, 146, 207}

\usepackage{algorithm}
\usepackage{algpseudocode}

\usepackage{amsmath, amssymb, amsthm}

\usepackage{amsmath,amsfonts,bm}

\def\eqref#1{equation~\ref{#1}}

\def\1{\bm{1}}

\DeclareMathAlphabet{\mathsfit}{\encodingdefault}{\sfdefault}{m}{sl}
\SetMathAlphabet{\mathsfit}{bold}{\encodingdefault}{\sfdefault}{bx}{n}

\theoremstyle{definition}    
\newtheorem{corollary}{Corollary}
\newtheorem{proposition}[corollary]{Proposition}
\newtheorem{theorem}[corollary]{Theorem}

\newtheorem{definition}[corollary]{Definition} 

\usepackage{url}
\usepackage{xurl}
\usepackage{nicefrac}
\usepackage{fancyvrb}
\usepackage[normalem]{ulem}

\usepackage{CJKutf8}

\usepackage{tikz}
\usetikzlibrary{positioning}
\usepackage{pgfplots}
\pgfplotsset{compat=1.18}

\usepackage[raster,skins]{tcolorbox}
\tcbuselibrary{breakable}
\usepackage{transparent}

\usepackage{calligra}

\newtcolorbox{myquote}[1][]{
  enhanced,
  frame hidden,
  boxrule=0pt,
  arc=5pt,
  width=0.9\textwidth,
  before skip=5pt,
  after skip=10pt,
  boxsep=15pt,
  left=15pt, right=15pt, top=5pt, bottom=8pt,
  colback=lightpurple, opacityback=0.1,
  drop fuzzy shadow=lightpurple!60,
  sharp corners,
  #1
}

\title{From Context-Aware to Conflict-Aware: Generalizing Contrastive Decoding for Knowledge Conflict in LLMs}

\author{
 \textbf{Runze Jiang\textsuperscript{1,2}},
 \textbf{Taiqiang Wu\textsuperscript{3}},
 \textbf{Yan Wang\textsuperscript{2}},
\\
 \textbf{Bingyu Zhu\textsuperscript{2}\text{\dag}},
 \textbf{Longtao Huang\textsuperscript{2}}
\\
 \textsuperscript{1}Peking University,
 \textsuperscript{2}Alibaba Group,
 \textsuperscript{3}The University of Hong Kong
\\
 \small{
   \textbf{\textsuperscript{\dag}Corresponding authors}
 }
}

\begin{document}

\begin{CJK}{UTF8}{gbsn}
\maketitle

\begin{abstract}

When large language models generate from retrieved or augmented contexts, conflicts between external context and parametric priors remain a central reliability bottleneck.
Existing contrastive decoding methods follow a \emph{context-aware} paradigm that unilaterally amplifies context over parametric priors, overwriting correct priors when the context is erroneous.
We generalize this to the \textbf{conflict-aware} paradigm that dynamically allocates authority between prior and context based on conflict signals, rather than presupposing context trustworthiness.
We show that the affine combination of prior and context logits yields a \textbf{power family} with an inherent \textbf{regime asymmetry}: extrapolation amplifies errors unboundedly when the prior is correct, interpolation under-corrects when the context is correct, and no static regime covers both. Existing contrastive decoding methods are instances of this family, mostly extrapolative.
To evaluate both conflict directions, we propose TriState-Bench, a model-aware evaluation protocol that calibrates per-model prior knowledge to measure three conflict states: correction, resistance, and agreement.
To resolve the asymmetry, we propose Adaptive Regime Routing (ARR), which routes between regimes at each step, lifting resistance EM from below 6 to 16--33 without sacrificing correction or agreement.
Our code is available at \url{https://github.com/keith-Jiang/conflict-aware-decoding}.
\end{abstract}



\section{Introduction}

When large language models generate from retrieved or augmented contexts, conflicts between external context and parametric priors remain a central reliability bottleneck. Although LLMs encode substantial factual knowledge in their parameters \citep{petroni2019language, roberts2020much}, this parametric memory is often incomplete, outdated, or incorrect \citep{mallen2023not, kasai2023realtime}, motivating the use of retrieval-augmented generation \citep{lewis2020retrieval} and web search \citep{nakano2021webgpt} at inference time. When external context disagrees with the parametric prior, a \emph{knowledge conflict} arises \citep{xu2024knowledge}.

\begin{figure}[t]
  \centering
  \includegraphics[width=0.6\linewidth]{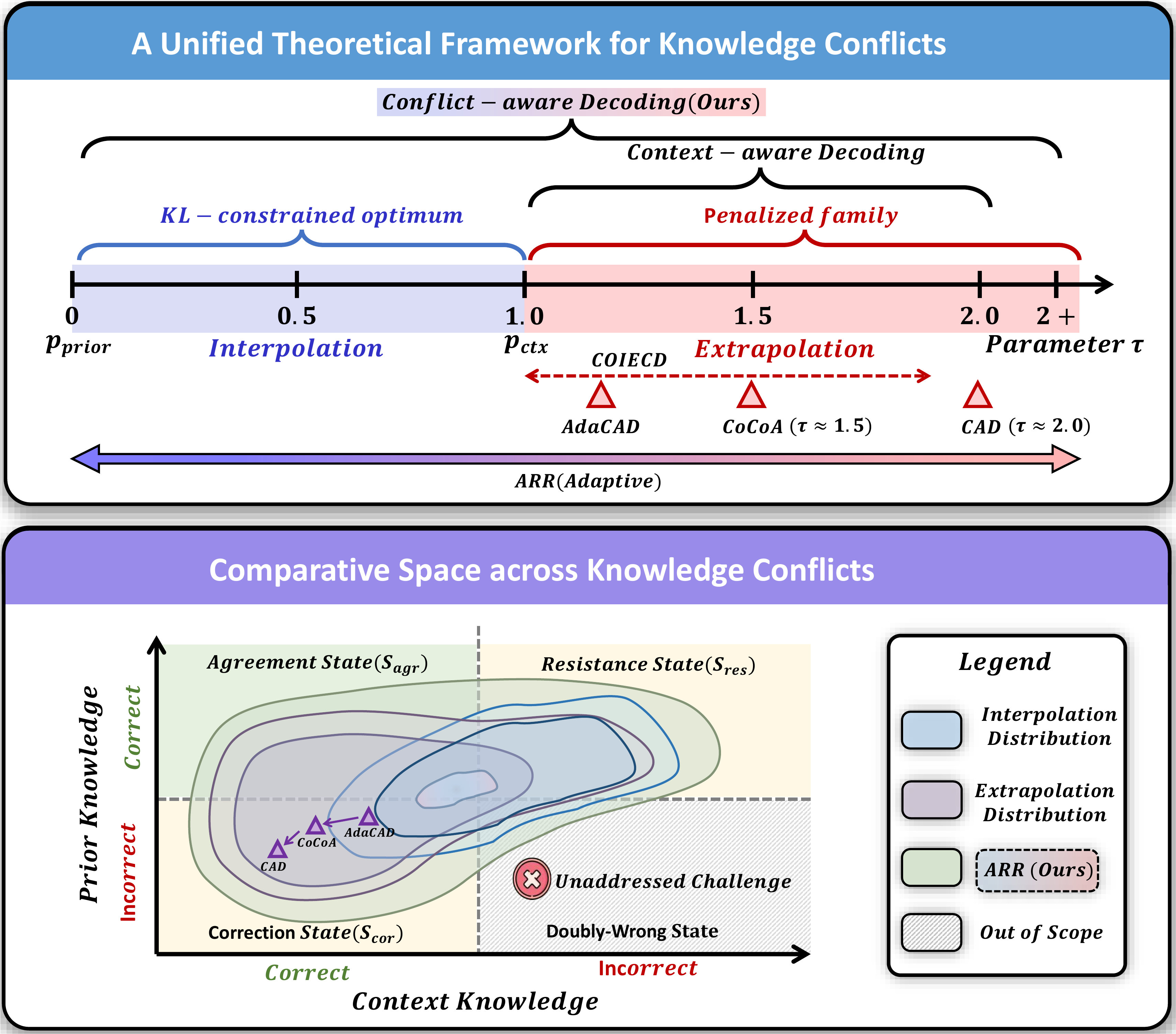}
  \caption{The unified power-family framework (upper) and comparison among proposed ARR and existing methods (lower).}
  \label{fig:framework}
\end{figure}

To address this issue, contrastive decoding methods contrast the output distributions with and without context \citep{shi2024trusting, wang2025adacad, yuan2024discerning, khandelwal2025cocoa}. These methods follow the \emph{context-aware} paradigm that implicitly assumes context is always more reliable than the prior, unidirectionally amplifying the context-over-prior logit increment. However, when the context is incorrect, this unidirectional amplification indiscriminately pushes the distribution toward it, overriding the prior's correct probability structure and biasing generation toward wrong answers.

Therefore, we generalize the problem from the context-aware paradigm to the \textbf{conflict-aware} paradigm: rather than presuming the context to be trustworthy, we dynamically allocate authority between the prior and the context at each decoding step based on conflict signals. This extends intervention to both sides and handles the two opposing conflict states, correction and resistance.

Under this paradigm, a single-scalar affine combination of $p_{\text{pri},t}$ and $p_{\text{ctx},t}$ in logit space yields the minimal parameterization: a \textbf{power family} $q_{\tau,t}(y) \propto    
p_{\text{pri},t}(y)^{1-\tau}p_{\text{ctx},t}(y)^{\tau}$, of which existing methods are all instances. The family partitions at $\tau = 1$ into two regimes: \emph{interpolation} ($\tau \in (0,1)$), the unique  optimum of a KL-constrained problem that bounds trust reallocation; and \emph{extrapolation} ($\tau > 1$), a penalized objective that suppresses prior-preferred tokens. A \textbf{regime asymmetry} emerges: extrapolation overrides a correct prior, interpolation underweights a correct context, so no single static regime handles both. Existing contrastive methods lie predominantly on the extrapolation side, structurally lacking resistance coverage.

To expose and address this asymmetry, we approach from both the evaluation and decoding sides. On the evaluation side, we propose \textbf{TriState-Bench}, a model-aware evaluation protocol that dynamically assigns each question to one of three conflict states (correction, resistance, or agreement), measuring corrective ability, prior preservation, and generation stability separately. On the decoding side, we propose \textbf{Adaptive Regime Routing (ARR)}, a theory-informed instantiation of the conflict-aware paradigm that routes between regimes per step from conflict signals in $p_{\text{prior},t}$ and $p_{\text{ctx},t}$. Across four model families, ARR covers both conflict directions, lifting resistance EM from below 6 to 16--33 without loss on correction or agreement.

Our contributions can be summarized as follows:
\begin{itemize}
    \item \textbf{Power family and regime asymmetry.} We propose a power family as the minimal parameterization of the conflict-aware paradigm, subsuming existing contrastive decoding methods, and identify an asymmetry between interpolation and extrapolation regimes.
    \item \textbf{TriState-Bench.} The first model-aware tristate benchmark for knowledge conflict, measuring correction, resistance and agreement.
    \item \textbf{Adaptive Regime Routing (ARR).} A theory-informed instantiation of the conflict-aware paradigm that dynamically routes between two regimes per step from conflict signals.
\end{itemize}
\section{Related Work}

\paragraph{Knowledge Conflict in LLMs}
Knowledge conflict arises when contextual input contradicts parametric knowledge stored in the weights \citep{petroni2019language, roberts2020much, xu2024knowledge}. Existing mitigation falls into two families: training-time fine-tuning for context faithfulness \citep{li2023large, zhou2023context} and inference-time decoding adjustments. The latter further divides into contrastive decoding, which reweights the prior distribution $p_{\text{pri},t}$ against the contextual distribution $p_{\text{ctx},t}$ \citep{shi2024trusting, wang2025adacad, yuan2024discerning, khandelwal2025cocoa}, and hidden-state intervention that modifies intermediate representations or attention patterns \citep{li2025taming, zhao2025steering}. We focus on contrastive decoding methods; hidden-state methods operate on different internal objects and fall outside our scope.

\paragraph{Contrastive Decoding under Knowledge Conflict}
Contrastive decoding \citep{li2023contrastive} was originally proposed to contrast expert and amateur models.  Subsequent work adapts it to knowledge conflict by contrasting the same model's output distributions with and without context, denoted $p_{\text{ctx},t}$ and $p_{\text{pri},t}$ respectively. Methods in this line progress from static to adaptive weighting and from single-signal to multi-signal gating: CAD \citep{shi2024trusting} amplifies the context-over-prior logit difference with a fixed $\alpha$; AdaCAD \citep{wang2025adacad} replaces the fixed weight with a Jensen--Shannon-based step-wise coefficient; COIECD \citep{yuan2024discerning} introduces token-level conflict detection; and CoCoA \citep{khandelwal2025cocoa} extends the scalar signal to a multi-signal gate.
These methods are uniformly context-aware; we generalize to a conflict-aware paradigm and subsume them as special cases of a unified power family.

\paragraph{Evaluation of Knowledge Conflict}
Existing benchmarks evaluate knowledge conflict along different axes.
NQ-Swap \citep{longpre2021entity} and NQ-Synth \citep{wang2025adacad} both reduce conflict to a single faithfulness axis: the former substitutes gold entities to test whether the model follows context over its parametric answer, while the latter replaces the context answer with the model's own output as a context-agrees-with-prior control. ClashEval \citep{wu2024clasheval} moves to a bidirectional view, separately measuring prior-biased and context-biased errors. Two gaps remain: NQ-Swap and NQ-Synth miss the correction state entirely, and ClashEval, though bidirectional, targets end-to-end LLM behavior rather than isolating decoding methods. Moreover, all three assign conflict labels statically, without conditioning on what the model actually believes. Our protocol addresses both gaps (Section~\ref{sec:benchmark_pipeline}).

\section{Preliminaries and Generalized Framework}
\subsection{Task Setup and Notation}

Given a query $x$, an external context $c$, and step $t$, we consider the same model's distributions without and with context:
\begin{equation}
\begin{aligned}
p_{\text{pri},t}(\cdot) &= p_\theta(\cdot\mid x, y_{<t}), \\
p_{\text{ctx},t}(\cdot) &= p_\theta(\cdot\mid x, c, y_{<t}).
\end{aligned}
\end{equation}
Their discrepancy reflects the influence of $c$ on the model's knowledge state.

\subsection{Background}
\label{sec:background}

\paragraph{Context-aware Decoding (CAD).} CAD applies a PMI-style adjustment that amplifies the context-over-prior increment, with a single contrastive strength $\alpha$ shared across all tokens:
\begin{equation}
    q_t^{\text{CAD}}(y)
    \propto
    p_{\text{ctx},t}(y)
    \left[
    \frac{p_{\text{ctx},t}(y)}
    {p_{\text{pri},t}(y)}
    \right]^\alpha .
\end{equation}

\paragraph{COIECD.} COIECD identifies a conflict token set $\mathcal{C}_t \subseteq V$ via the Stable Entropy Hypothesis and switches the base distribution between conflicting and non-conflicting tokens. With $g_t(y) = \log p_{\text{ctx},t}(y) - \log p_{\text{pri},t}(y)$. The score is:
\begin{equation}
    s_t^{\text{COIECD}}(y)
    =
    \begin{cases}
    \log p_{\text{pri},t}(y)+\alpha g_t(y), & y\in \mathcal C_t,\\
    \log p_{\text{ctx},t}(y)+\alpha g_t(y), & y\notin \mathcal C_t,
    \end{cases}
\end{equation}
softmax-normalization to $q_t^{\text{COIECD}}$.

\paragraph{AdaCAD.} AdaCAD replaces the static hyperparameter $\alpha$ in CAD with a dynamic weight based on Jensen-Shannon divergence, $\alpha_t^{\text{JSD}} = \operatorname{JSD}\!\left(p_{\text{pri},t} \,\|\, p_{\text{ctx},t}\right)$:
\begin{equation}
    q_t^{\text{AdaCAD}}(y)
    \propto
    p_{\text{ctx},t}(y)
    \left[
    \frac{p_{\text{ctx},t}(y)}
    {p_{\text{pri},t}(y)}
    \right]^{\alpha_t^{\text{JSD}}}.
\end{equation}

\paragraph{CoCoA.} CoCoA introduces three conflict signals (Rényi divergence, entropy gap, and contextual peakedness) and fuses them into an adaptive gating weight $\lambda_t$ to construct the final distribution:
\begin{equation}
    q_t^{\text{CoCoA}}(y)
    \propto
    p_{\text{ctx},t}(y)^{\lambda_t}
    p_{\text{pri},t}(y)^{1-\lambda_t}.
\end{equation}

\paragraph{Summary.} All four methods are \emph{context-aware}: they presume the context trustworthy, reducing the design goal to \emph{how to leverage the context more strongly}. However, a trustworthy context is only a special case. In general, the relative reliability of prior and context is not known at decoding time: sometimes the context carries correct evidence while the prior reflects outdated or incorrect memory (\emph{correction state}, $\mathcal{S}_{\text{cor}}$); sometimes the prior is more reliable while the context is noisy, incorrect, or mismatched to the query (\emph{resistance state}, $\mathcal{S}_{\text{res}}$); sometimes the two already agree, in which case excessive contrast disrupts stable generation (\emph{agreement state}, $\mathcal{S}_{\text{agr}}$). We term this broader setting the \emph{conflict-aware} paradigm. The output distribution at each decoding step is then determined by dynamically arbitrating between $p_{\text{pri},t}$ and $p_{\text{ctx},t}$ based on conflict signals $\mathcal{S}_t$:
\begin{equation}
  q_t(\cdot) = \mathcal{F}_t\!\left(p_{\text{pri},t}(\cdot),\, p_{\text{ctx},t}(\cdot),\, \mathcal{S}_t\right).
  \label{eq:conflict_aware}
\end{equation}

\subsection{A Generalized Power-Family View}
\label{sec:method_mapping}

We instantiate $\mathcal{F}t$ with the simplest parameterization: an affine combination of $\log p{\text{pri},t}$ and $\log p_{\text{ctx},t}$ in logit space, normalized to a power-family distribution:
\begin{equation}
\begin{aligned}
q_{\tau,t}(y)
=
\frac{1}{Z_{\tau,t}}\,
p_{\text{pri},t}(y)^{1-\tau}\,
p_{\text{ctx},t}(y)^\tau,\quad 
Z_{\tau,t}
=
\sum_{y'\in V}
p_{\text{pri},t}(y')^{1-\tau}\,
p_{\text{ctx},t}(y')^\tau.
\end{aligned}
\end{equation}
$\tau \in [0,1]$ interpolates between prior and context distributions; $\tau > 1$ extrapolates beyond the context by applying a negative exponent to the prior. The four methods in Section~\ref{sec:background} are all special cases that differ only in their choice of $\tau$, each occupying a fixed one-sided position (Figure~\ref{fig:framework}, Table~\ref{tab:regime_mapping}; extended discussion and derivations in Appendix~\ref{app:background}).

\begin{table}[t]
\centering
\small
\setlength{\tabcolsep}{5pt}
\renewcommand{\arraystretch}{1.2}
\begin{tabular}{@{}lccc@{}}
\toprule
\textbf{Method} & \textbf{Functional form} & \textbf{$\tau$} & \textbf{Regime} \\
\midrule
CAD     & $p_{\text{ctx},t}^{1+\alpha} p_{\text{pri},t}^{-\alpha}$ & $\tau=1+\alpha$ & Extrapolation \\
AdaCAD  & $p_{\text{ctx},t}^{1+\alpha_t^{\text{JSD}}} p_{\text{pri},t}^{-\alpha_t^{\text{JSD}}}$ & $\tau_t=1+\alpha_t^{\text{JSD}}$ & Extrapolation \\
COIECD  & $p_{\text{pri},t}^{1-\lambda_t} p_{\text{ctx},t}^{\lambda_t}$ & $\tau_t\in\{\alpha,1+\alpha\}$ & Extrapolation \\
CoCoA$^*$ & $p_{\text{pri},t}^{1-\lambda_t} p_{\text{ctx},t}^{\lambda_t}$ & $\tau_t=\lambda_t+\gamma$ & Extrapolation \\
\bottomrule
\end{tabular}
\caption{Existing contrastive decoding methods cast into the unified power family $q_{\tau,t}(y)\propto p_{\text{pri},t}(y)^{1-\tau}p_{\text{ctx},t}(y)^{\tau}$.}
\label{tab:regime_mapping}
\end{table}
\section{Regime Structure and Conflict Asymmetry}
\subsection{Interpolation vs. Extrapolation}
\begin{theorem}[Interpolation as a KL-Constrained Optimum]
\label{thm:interpolation}
For any $\epsilon \in \bigl[0,\, \mathbb{D}_{\mathrm{KL}}(p_{\text{pri},t} \,\|\, p_{\text{ctx},t})\bigr]$, consider the constrained optimization problem
\begin{equation}
\begin{aligned}
\min_{q \in \Delta(V)} \ \mathbb{D}_{\mathrm{KL}}\bigl(q \,\|\, p_{\text{pri},t}\bigr) \quad
\text{s.t.}  \ \mathbb{D}_{\mathrm{KL}}\bigl(q \,\|\, p_{\text{ctx},t}\bigr) \;\le\; \epsilon.
\end{aligned}
\end{equation}
The problem admits a unique optimum $q^\star$, given in closed form by
\begin{equation}
q^\star(y) =\frac{1}{Z_{\tau,t}}\, p_{\text{pri},t}(y)^{1-\tau}\, p_{\text{ctx},t}(y)^{\tau},
\end{equation}
 where $\tau \in [0, 1]$ is in monotone bijection with $\epsilon$. The endpoints $\tau = 0$ and $\tau = 1$ recover $p_{\text{pri},t}$ and $p_{\text{ctx},t}$, respectively.
\end{theorem}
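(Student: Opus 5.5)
The plan is a convex Lagrangian argument on the simplex, followed by a monotonicity analysis of the resulting geometric-mixture path. For simplicity, assume both $p_{\text{pri},t}$ and $p_{\text{ctx},t}$ have full support on $V$. This is standard for softmax outputs, and it makes both KL terms finite and strictly convex on $\Delta(V)$.

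\textbf{Existence and uniqueness.} The feasible set $\{q:\mathbb{D}_{\mathrm{KL}}(q\,\|\,p_{\text{ctx},t})\le\epsilon\}$ is convex and compact, because KL is a convex, continuous function of $q$ under full support. It is nonempty since $p_{\text{ctx},t}$ lies in it. The objective $q\mapsto\mathbb{D}_{\mathrm{KL}}(q\,\|\,p_{\text{pri},t})$ is continuous and strictly convex. A minimizer therefore exists and is unique.

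\textbf{Identifying the form.} Introduce a multiplier $\lambda\ge 0$ for the KL constraint and $\mu$ for normalization. Stationarity of
\[
\mathbb{D}_{\mathrm{KL}}(q\,\|\,p_{\text{pri},t})+\lambda\bigl(\mathbb{D}_{\mathrm{KL}}(q\,\|\,p_{\text{ctx},t})-\epsilon\bigr)+\mu\Bigl(\textstyle\sum_y q(y)-1\Bigr)
\]
gives $(1+\lambda)\log q(y)=\log p_{\text{pri},t}(y)+\lambda\log p_{\text{ctx},t}(y)+\text{const}$. Setting $\tau=\lambda/(1+\lambda)\in[0,1)$ yields $q^\star=q_{\tau,t}$. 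Slater's condition holds whenever $\epsilon>0$, since $q=p_{\text{ctx},t}$ is strictly feasible. Combined with convexity, the KKT conditions are then sufficient and necessary. Two cases remain:
\begin{itemize}
\item If $\epsilon=\mathbb{D}_{\mathrm{KL}}(p_{\text{pri},t}\,\|\,p_{\text{ctx},t})$, then $p_{\text{pri},t}$ itself is feasible. It is the unconstrained minimizer, so $\tau=0$.
\item If $\epsilon=0$, the feasible set is the single point $\{p_{\text{ctx},t}\}$, giving $\tau=1$. This handles the endpoint without needing Slater.
\end{itemize}
For $\epsilon$ strictly inside the interval, $p_{\text{pri},t}$ is infeasible, so the constraint must be active: $\lambda>0$ and $\mathbb{D}_{\mathrm{KL}}(q_{\tau,t}\,\|\,p_{\text{ctx},t})=\epsilon$.

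\textbf{Monotone bijection (main obstacle).} The bijection follows from showing that $f(\tau):=\mathbb{D}_{\mathrm{KL}}(q_{\tau,t}\,\|\,p_{\text{ctx},t})$ is continuous and strictly decreasing on $[0,1]$, with $f(0)=\mathbb{D}_{\mathrm{KL}}(p_{\text{pri},t}\,\|\,p_{\text{ctx},t})$ and $f(1)=0$. Write $g=\log p_{\text{ctx},t}-\log p_{\text{pri},t}$, so that $q_\tau\propto p_{\text{pri},t}e^{\tau g}$ is an exponential family with log-partition function $A(\tau)=\log Z_{\tau,t}$. Then
\[
f(\tau)=\mathbb{E}_{q_\tau}[\log q_\tau-\log p_{\text{ctx},t}]=(\tau-1)\,\mathbb{E}_{q_\tau}[g]-A(\tau)+A(1).
\]
Using $A'=\mathbb{E}_{q_\tau}[g]$ and $A''=\mathrm{Var}_{q_\tau}(g)$, this simplifies to
\[
f'(\tau)=(\tau-1)\,\mathrm{Var}_{q_\tau}(g),
\]
which is strictly negative on $[0,1)$ unless $g$ is constant. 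If $g$ is constant, then $p_{\text{pri},t}=p_{\text{ctx},t}$, the interval for $\epsilon$ collapses to $\{0\}$, and the statement is trivial.

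Hence $f$ is a continuous strictly decreasing bijection from $[0,1]$ onto $[0,\mathbb{D}_{\mathrm{KL}}(p_{\text{pri},t}\,\|\,p_{\text{ctx},t})]$. The map $\epsilon\mapsto\tau=f^{-1}(\epsilon)$ is therefore monotone (decreasing) and bijective. This is consistent with the multiplier analysis, since $\lambda=\tau/(1-\tau)$ is monotone in $\tau$. The endpoint claims follow from $q_0=p_{\text{pri},t}$ and $q_1=p_{\text{ctx},t}$.

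The delicate step is this derivative computation, together with the bookkeeping needed to check that the KKT solution, with the constraint active, coincides with the $\tau=f^{-1}(\epsilon)$ of the bijection.
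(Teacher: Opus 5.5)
Your proposal is correct and follows essentially the same route as the paper's proof: Weierstrass plus strict convexity for existence and uniqueness, Lagrangian stationarity with $\tau=\lambda/(1+\lambda)$, an active constraint because $p_{\text{pri},t}$ is infeasible, and the exponential-family identity $f'(\tau)=(\tau-1)\,\mathrm{Var}_{q_\tau}(g)$ for the monotone bijection. There are two small differences. First, the paper proves that $q^\star$ has full support (by perturbing along $e_{y_0}-e_{y_1}$, where the $t\log t$ term has derivative $-\infty$) before applying gradient-form KKT; your ``necessary'' claim relies on this tacitly, though you can avoid it by constructing the KKT point $q_{f^{-1}(\epsilon)}$ and using sufficiency plus uniqueness. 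Second, you handle the endpoints $\epsilon=0$ and $\epsilon=\mathbb{D}_{\mathrm{KL}}(p_{\text{pri},t}\,\|\,p_{\text{ctx},t})$ explicitly, which the paper's proof omits.
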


\begin{theorem}[Extrapolation as a KL-Penalized Optimum]
\label{thm:extrapolation}
For any $\eta \in [0, 1)$, consider the penalized optimization problem
\begin{equation}
\min_{q \in \Delta(V)}\; \mathbb{D}_{\mathrm{KL}}\bigl(q \,\|\, p_{\text{ctx},t}\bigr) \;-\; \eta\, \mathbb{D}_{\mathrm{KL}}\bigl(q \,\|\, p_{\text{pri},t}\bigr).
\end{equation}
The problem admits a unique optimum $q^\star$, given in closed form by
\begin{equation}
q^\star(y) = \frac{1}{Z_{\tau,t}}\, p_{\text{pri},t}(y)^{1-\tau}\, p_{\text{ctx},t}(y)^{\tau},
\end{equation}
where $\tau = \frac{1}{1-\eta} \in [1,+\infty)$ is in monotone bijection with $\eta$. At $\eta = 0$, $\tau = 1$ recovers $p_{\text{ctx},t}$; as $\eta \uparrow 1$, $\tau \to +\infty$. The exponent $1-\tau$ is now negative: the objective actively pushes $q$ away from $p_{\text{pri},t}$ while keeping it close to $p_{\text{ctx},t}$.
\end{theorem}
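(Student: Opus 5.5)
The plan is to rewrite the penalized objective, up to a positive factor and an additive constant, as a single KL divergence from $q$ to the normalized power-family distribution; Gibbs' inequality then gives existence, uniqueness and the closed form together. Throughout I assume, as is automatic for softmax outputs over a finite vocabulary $V$, that $p_{\text{pri},t}$ and $p_{\text{ctx},t}$ have full support. This keeps every KL term finite for every $q\in\Delta(V)$, and it keeps $p_{\text{pri},t}(y)^{1-\tau}$ finite even though the exponent $1-\tau$ is negative. I will state this assumption explicitly, since without it the negative exponent is ill-defined on tokens the prior excludes.

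First I expand both divergences. Writing $H(q)=-\sum_y q(y)\log q(y)$, the objective is
\begin{equation*}
J(q) = -(1-\eta)H(q) - \sum_{y} q(y)\bigl[\log p_{\text{ctx},t}(y) - \eta \log p_{\text{pri},t}(y)\bigr].
\end{equation*}
Since $\eta<1$, the coefficient $1-\eta$ is strictly positive, so I can factor it out. Set $\tau = 1/(1-\eta)$, so that $-\eta/(1-\eta) = 1-\tau$. The bracket divided by $1-\eta$ then becomes $\tau\log p_{\text{ctx},t}(y) + (1-\tau)\log p_{\text{pri},t}(y) = \log\bigl(Z_{\tau,t}\,q_{\tau,t}(y)\bigr)$, where $Z_{\tau,t}$ is the normalizer defined above; it is finite and positive because $V$ is finite and both distributions have full support. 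Hence
\begin{equation*}
J(q) = (1-\eta)\Bigl[\mathbb{D}_{\mathrm{KL}}\bigl(q \,\|\, q_{\tau,t}\bigr) - \log Z_{\tau,t}\Bigr].
\end{equation*}

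Second, Gibbs' inequality says $\mathbb{D}_{\mathrm{KL}}(q\|q_{\tau,t})\ge 0$, with equality if and only if $q=q_{\tau,t}$. Since $1-\eta>0$ and $\log Z_{\tau,t}$ does not depend on $q$, the unique minimizer over $\Delta(V)$ is $q^\star=q_{\tau,t}$. This is exactly the claimed closed form. Alternatively, uniqueness follows from strict convexity of $-(1-\eta)H$ plus a linear term.

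Third, the parameter statements are elementary. The map $\eta\mapsto 1/(1-\eta)$ is continuous and strictly increasing from $[0,1)$ onto $[1,+\infty)$, so it is a monotone bijection. At $\eta=0$ it gives $\tau=1$ and $q^\star=p_{\text{ctx},t}$. As $\eta\uparrow 1$, $\tau\to+\infty$, and $1-\tau=-\eta/(1-\eta)\le 0$, which gives the "push away from the prior" interpretation.

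I expect no serious obstacle; the only delicate point is the boundary. I will remark that at $\eta=1$ the entropy term vanishes and $J$ becomes linear in $q$. The minimum is then attained at a vertex and is generally not unique, which explains why the theorem restricts to $\eta<1$. The full-support assumption is likewise essential for the negative exponent.
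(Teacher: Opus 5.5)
Your proof is correct, but it follows a different route from the paper's. The paper expands the objective as $F_\eta(q) = (1-\eta)\sum_y q(y)\log q(y) - \sum_y q(y)\log p_{\text{ctx},t}(y) + \eta\sum_y q(y)\log p_{\text{pri},t}(y)$ and gets existence and uniqueness from strict convexity on the compact simplex. It then needs a separate perturbation argument along $q^\star + t(e_{y_0}-e_{y_1})$ to show the minimizer has full support; there the $t\log t$ term has right derivative $-\infty$. Only after that does it read off the closed form from the Lagrangian stationarity condition in the interior, and it treats $\eta=0$ as a separate case.

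Your rewriting of the objective as $(1-\eta)\bigl[\mathbb{D}_{\mathrm{KL}}(q\,\|\,q_{\tau,t}) - \log Z_{\tau,t}\bigr]$ holds for every $q\in\Delta(V)$, boundary points included. Gibbs' inequality therefore gives existence, uniqueness and the closed form in one step. You need no support analysis and no KKT machinery, and the argument works uniformly for $\eta\in[0,1)$. As a bonus you get the optimal value $-(1-\eta)\log Z_{\tau,t}$ and the exact suboptimality gap $(1-\eta)\,\mathbb{D}_{\mathrm{KL}}(q\,\|\,q_{\tau,t})$. The paper's KKT template is more generic, and it is the one the paper reuses for Theorem~\ref{thm:interpolation}. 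There an inequality constraint and the matching of the multiplier to $\epsilon$ have to be handled as well.

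Your explicit full-support assumption is one the paper uses only tacitly, since its stationarity condition contains $\log p_{\text{pri},t}(y)$. The assumption is genuinely needed: otherwise $-\eta\,\mathbb{D}_{\mathrm{KL}}(q\,\|\,p_{\text{pri},t})$ can equal $-\infty$. Likewise, your $1-\tau\le 0$ is more accurate than the statement's ``negative'', which fails at $\eta=0$.

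One small correction to your closing remark. At $\eta=1$ the minimizers are exactly the distributions supported on $\arg\max_y p_{\text{ctx},t}(y)/p_{\text{pri},t}(y)$. So the minimizer is a unique point mass unless that maximizer is tied. What is lost at $\eta=1$ is strict convexity and membership in the power family (the minimizer is the $\tau\to\infty$ limit), not, in the generic case, uniqueness.
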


The two theorems mathematically justify why the power family is the right parameterization and reveal each regime's structure (Appendix~\ref{sec:proofs}). Interpolation ($\tau \in [0,1]$) yields a \emph{bounded trust reallocation}: $q_{\tau,t}$ is the unique optimum balancing prior-closeness against context-movement, with values always sandwiched between $p_{\text{pri},t}$ and $p_{\text{ctx},t}$. Extrapolation ($\tau > 1$) advances past $p_{\text{ctx},t}$ and imposes a \emph{negative-exponent suppression on prior-favored tokens}. The two meet at $\tau = 1$, the limit of interpolation ($\epsilon \to 0$) and the start of extrapolation ($\eta \to 0$).

\subsection{Regime Asymmetry}
\label{sec:regime_asymmetry}

\begin{definition}[Pairwise Log-Odds]
\label{def:pairwise-logodds}
For any $a, b \in V$, the pairwise log-odds under the power family $q_{\tau,t}$ is defined as
\begin{equation}
\ell_{a,b}(\tau) \;:=\; \log\frac{q_{\tau,t}(a)}{q_{\tau,t}(b)} \;=\; (1-\tau)\,\ell^{\text{pri}}_{a,b} \;+\; \tau\,\ell^{\text{ctx}}_{a,b},
\end{equation}
where $\ell^{\text{pri}}_{a,b} = \log[p_{\text{pri},t}(a)/p_{\text{pri},t}(b)]$ and $\ell^{\text{ctx}}_{a,b} = \log[p_{\text{ctx},t}(a)/p_{\text{ctx},t}(b)]$.
\end{definition}

\begin{proposition}[Pairwise Reversal Threshold]
\label{prop:pairwise-reversal}
Let $\Delta_{a,b} := \ell^{\text{ctx}}_{a,b} - \ell^{\text{pri}}_{a,b}$. If $\Delta_{a,b} \neq 0$, there exists a unique \emph{pairwise reversal threshold}
\begin{equation}
\tau^\star_{a,b} := -\frac{\ell^{\text{pri}}_{a,b}}{\Delta_{a,b}}
\end{equation}
such that $\ell_{a,b}(\tau^\star_{a,b}) = 0$. As $\tau$ crosses $\tau^\star_{a,b}$, the preference of $q_{\tau,t}$ between $a$ and $b$ reverses. If $\Delta_{a,b} = 0$, the pairwise log-odds is constant and no reversal occurs.
\end{proposition}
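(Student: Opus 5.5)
The plan is to treat the pairwise log-odds of Definition~\ref{def:pairwise-logodds} as an affine function of $\tau$ and read off its unique zero. Everything below is elementary once the linear representation is justified.

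First, I will verify the representation in Definition~\ref{def:pairwise-logodds}. For any $a,b\in V$, the normalizer $Z_{\tau,t}$ cancels in the ratio $q_{\tau,t}(a)/q_{\tau,t}(b)$. This leaves
\[
\ell_{a,b}(\tau)=(1-\tau)\ell^{\text{pri}}_{a,b}+\tau\ell^{\text{ctx}}_{a,b}.
\]
I will assume that $p_{\text{pri},t}$ and $p_{\text{ctx},t}$ put positive mass on $a$ and $b$, as they do under a softmax, so all logarithms are finite. Regrouping gives
\[
\ell_{a,b}(\tau)=\ell^{\text{pri}}_{a,b}+\tau\,\Delta_{a,b},
\]
which is affine in $\tau$ with slope $\Delta_{a,b}$ and intercept $\ell^{\text{pri}}_{a,b}=\ell_{a,b}(0)$.

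Next, the case analysis.
\begin{itemize}
\item If $\Delta_{a,b}\neq 0$, the affine function is strictly monotone on $\mathbb{R}$. It therefore has exactly one root, and solving $\ell^{\text{pri}}_{a,b}+\tau\Delta_{a,b}=0$ gives $\tau^\star_{a,b}=-\ell^{\text{pri}}_{a,b}/\Delta_{a,b}$. Strict monotonicity also yields the reversal claim: $\operatorname{sign}\ell_{a,b}(\tau)=\operatorname{sign}\bigl(\Delta_{a,b}(\tau-\tau^\star_{a,b})\bigr)$, so the sign of the log-odds, which is the preference of $q_{\tau,t}$ between $a$ and $b$, flips exactly as $\tau$ crosses $\tau^\star_{a,b}$.
\item If $\Delta_{a,b}=0$, then $\ell_{a,b}(\tau)\equiv\ell^{\text{pri}}_{a,b}$ is constant, so no reversal occurs. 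If moreover $\ell^{\text{pri}}_{a,b}=0$, the two tokens are tied for every $\tau$, which is still not a reversal.
\end{itemize}

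The only point requiring care is the domain of $\tau$. The power family is defined for all real $\tau$, but the paper's regimes cover $\tau\ge 0$. I will state uniqueness over $\mathbb{R}$ and note that $\tau^\star_{a,b}$ may fall outside $[0,\infty)$, in which case no reversal is realized within the admissible regimes. I will also add as a remark where the threshold lands:
\begin{itemize}
\item $\tau^\star_{a,b}\in(0,1)$ exactly when the prior and context disagree on the order of $a$ and $b$, i.e.\ $\ell^{\text{pri}}_{a,b}$ and $\ell^{\text{ctx}}_{a,b}$ have strictly opposite signs.
\item $\tau^\star_{a,b}>1$ exactly when $\ell^{\text{pri}}_{a,b}$ and $\ell^{\text{ctx}}_{a,b}$ have the same sign and $|\ell^{\text{ctx}}_{a,b}|<|\ell^{\text{pri}}_{a,b}|$, i.e.\ the context agrees with the prior's order but holds it less strongly.
\end{itemize}
This remark anticipates the regime-asymmetry discussion but is not needed for the proposition itself.
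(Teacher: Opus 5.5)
Your proof is correct and is essentially the paper's argument: cancel $Z_{\tau,t}$ in the ratio, rewrite $\ell_{a,b}(\tau)=\ell^{\text{pri}}_{a,b}+\tau\,\Delta_{a,b}$, and read off the unique zero of this affine function. Your explicit sign-flip justification of the reversal claim and your remark locating $\tau^\star_{a,b}$ in $(0,1)$ versus $(1,\infty)$ are both correct additions; the latter anticipates the paper's conflict-state corollary.
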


\begin{corollary}[Conflict-State Geometry of the Crossover Point]
\label{cor:regime-state}
Let $a$ be the ground-truth token and $b$ a competing token.
\begin{itemize}
  \item \textbf{Correction ($\mathcal{S}_{\mathrm{cor}}$):} $\ell^{\mathrm{pri}}_{a,b} < 0 < \ell^{\mathrm{ctx}}_{a,b}$, so $\tau^\star_{a,b} \in (0,1)$: the flip from incorrect to correct occurs within interpolation.

\item \textbf{Resistance ($\mathcal{S}_{\mathrm{res}}$):} $\ell^{\mathrm{pri}}_{a,b} > 0 > \ell^{\mathrm{ctx}}_{a,b}$, so $\tau^\star_{a,b} \in (0,1)$ but the flip is reversed: increasing $\tau$ moves $q_\tau$ 
from correct to incorrect, and extrapolation ($\tau > 1$) further amplifies the error.

\item \textbf{Agreement ($\mathcal{S}_{\mathrm{agr}}$):} $\ell^{\mathrm{pri}}_{a,b}, \ell^{\mathrm{ctx}}_{a,b} > 0$; for $\tau \in [0,1]$, $\ell_{a,b}(\tau)$ stays positive. The family is automatically stable. 
\end{itemize}
\end{corollary}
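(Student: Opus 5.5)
The plan is to treat $\ell_{a,b}(\tau)$ from Definition~\ref{def:pairwise-logodds} as the affine function $\ell_{a,b}(\tau) = \ell^{\mathrm{pri}}_{a,b} + \tau\,\Delta_{a,b}$ and read off each case from the signs of its intercept $\ell_{a,b}(0)=\ell^{\mathrm{pri}}_{a,b}$, its value $\ell_{a,b}(1)=\ell^{\mathrm{ctx}}_{a,b}$, and its slope $\Delta_{a,b}$. Proposition~\ref{prop:pairwise-reversal} supplies the unique zero $\tau^\star_{a,b}=-\ell^{\mathrm{pri}}_{a,b}/\Delta_{a,b}$ whenever $\Delta_{a,b}\neq 0$. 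Rewriting this zero as
\[
\tau^\star_{a,b} = \frac{\ell^{\mathrm{pri}}_{a,b}}{\ell^{\mathrm{pri}}_{a,b}-\ell^{\mathrm{ctx}}_{a,b}}
\]
makes the location claims transparent.

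\textbf{Correction and resistance.} In the correction case, $\ell^{\mathrm{pri}}_{a,b}<0<\ell^{\mathrm{ctx}}_{a,b}$ gives $\Delta_{a,b}>0$, so $\tau^\star_{a,b}$ is well defined. In the displayed form, the numerator is negative and the denominator is negative with strictly larger magnitude, so $\tau^\star_{a,b}\in(0,1)$. Alternatively, the intermediate value theorem applied to the affine $\ell_{a,b}$ on $[0,1]$ gives the same conclusion, and uniqueness follows from Proposition~\ref{prop:pairwise-reversal}. Since the slope is positive, $\ell_{a,b}$ passes from negative (preferring the incorrect $b$) to positive (preferring $a$), so the flip toward correctness happens inside interpolation.

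The resistance case is the mirror image. Here $\Delta_{a,b}<0$, the same ratio again lies in $(0,1)$, and the slope is negative, so increasing $\tau$ carries $q_\tau$ from preferring $a$ to preferring $b$. For the amplification claim, note that $\ell_{a,b}(\tau)=\ell^{\mathrm{ctx}}_{a,b}+(\tau-1)\Delta_{a,b}$. For $\tau>1$ this is strictly below $\ell^{\mathrm{ctx}}_{a,b}<0$ and decreases without bound, tending to $-\infty$ as $\tau\to\infty$. Consequently the ratio $q_\tau(a)/q_\tau(b)\to 0$.

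\textbf{Agreement.} The claim is positivity on $[0,1]$. An affine function on an interval attains its minimum at an endpoint, so
\[
\min_{\tau\in[0,1]}\ell_{a,b}(\tau)=\min\bigl(\ell^{\mathrm{pri}}_{a,b},\ell^{\mathrm{ctx}}_{a,b}\bigr)>0.
\]
Equivalently, $\ell_{a,b}(\tau)$ is a convex combination of two positive numbers. Either way, no reversal occurs in the interpolation regime, and the case $\Delta_{a,b}=0$ (constant positive log-odds) is included automatically. I would add a remark that this stability is confined to $[0,1]$: if $\Delta_{a,b}<0$, then $\tau^\star_{a,b}>1$, so extrapolation can still flip agreement pairs. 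This remark would be stated carefully so as not to overclaim.

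There is no real obstacle here; the argument is elementary sign-checking. The only point needing care is making the informal phrases precise. ``Flip from incorrect to correct'' should be stated as the sign change of $\ell_{a,b}$ at $\tau^\star_{a,b}$ together with the sign of the slope. ``Further amplifies'' should be stated as strict monotone decrease of $\ell_{a,b}$ beyond $\tau=1$ with divergence to $-\infty$.
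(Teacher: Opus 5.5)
Your proposal is correct and follows essentially the same route as the paper's proof. Both treat $\ell_{a,b}(\tau)=\ell^{\mathrm{pri}}_{a,b}+\tau\Delta_{a,b}$ as an affine function and locate $\tau^\star_{a,b}$ by sign-checking in each state, and both include the remark that under agreement with $\Delta_{a,b}<0$ any reversal lies at $\tau^\star_{a,b}>1$. The differences are cosmetic: you check $\tau^\star_{a,b}\in(0,1)$ through the ratio form $\ell^{\mathrm{pri}}_{a,b}/(\ell^{\mathrm{pri}}_{a,b}-\ell^{\mathrm{ctx}}_{a,b})$ instead of the paper's inequality chain, and you make ``amplifies'' precise as divergence to $-\infty$.
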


\begin{corollary}[Length Distortion under Extrapolation]
\label{cor:generation-mode}
Let $c$ be a continuation token and $s$ a stop token. By Proposition~\ref{prop:pairwise-reversal}, $\ell_{c,s}(\tau) = \ell^{\mathrm{ctx}}_{c,s} + (\tau-1)\Delta_{c,s}$. For $\tau > 1$, the log-odds extends past   
$p_{\mathrm{ctx}}$ with unbounded displacement:
\begin{itemize}
  \item $\Delta_{c,s} > 0$: \emph{over-generation} (continuation amplified beyond context).
  \item $\Delta_{c,s} < 0$: \emph{early stopping} (stopping amplified beyond context).
  \item $\Delta_{c,s} = 0$: no overshoot.
\end{itemize}
\end{corollary}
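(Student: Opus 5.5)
The argument reduces entirely to the affine structure of the pairwise log-odds in Definition~\ref{def:pairwise-logodds}, specialized to the pair $(c,s)$. First, I will rewrite $\ell_{c,s}(\tau) = (1-\tau)\ell^{\mathrm{pri}}_{c,s} + \tau\,\ell^{\mathrm{ctx}}_{c,s}$ by adding and subtracting $\ell^{\mathrm{ctx}}_{c,s}$. Writing $(1-\tau)\ell^{\mathrm{pri}}_{c,s} = \ell^{\mathrm{pri}}_{c,s} - \tau\ell^{\mathrm{pri}}_{c,s}$ and regrouping gives $\ell_{c,s}(\tau) = \ell^{\mathrm{pri}}_{c,s} + \tau\Delta_{c,s}$. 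Since $\ell^{\mathrm{pri}}_{c,s} = \ell^{\mathrm{ctx}}_{c,s} - \Delta_{c,s}$, this is
\[
\ell_{c,s}(\tau) = \ell^{\mathrm{ctx}}_{c,s} + (\tau-1)\Delta_{c,s}.
\]
This is the identity asserted in the statement. It is the same linear-in-$\tau$ form underlying Proposition~\ref{prop:pairwise-reversal}, now anchored at $\tau=1$ instead of $\tau=0$.

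Next, I will read off the three cases from the sign of $\Delta_{c,s}$ for $\tau>1$, where $\tau-1>0$. The displacement from the context log-odds is $\ell_{c,s}(\tau) - \ell^{\mathrm{ctx}}_{c,s} = (\tau-1)\Delta_{c,s}$. It has the sign of $\Delta_{c,s}$ and magnitude $(\tau-1)|\Delta_{c,s}|$, which is strictly increasing in $\tau$ and tends to $+\infty$ as $\tau\to\infty$ whenever $\Delta_{c,s}\neq 0$. This is the claimed unbounded displacement.
\begin{itemize}
  \item If $\Delta_{c,s}>0$, the odds $q_{\tau,t}(c)/q_{\tau,t}(s)$ exceed the context odds $p_{\mathrm{ctx},t}(c)/p_{\mathrm{ctx},t}(s)$ and diverge, so continuation is favored over stopping more than under $p_{\mathrm{ctx},t}$. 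This is over-generation.
  \item If $\Delta_{c,s}<0$, the odds fall below the context odds and tend to $0$. This is early stopping.
  \item If $\Delta_{c,s}=0$, the log-odds equals $\ell^{\mathrm{ctx}}_{c,s}$ for all $\tau$, so there is no overshoot.
\end{itemize}

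The only point requiring care is interpretive rather than technical. "Over-generation" and "early stopping" are per-step statements about the relative odds of continuing versus stopping compared with $p_{\mathrm{ctx},t}$, not claims about realized sequence length. I will state the conclusion precisely in terms of these per-step odds. I will add a remark that, when the sign of $\Delta_{c,s}$ persists across steps, the per-step bias compounds into the described length behavior. I will not attempt to prove that compounding formally, since it depends on the trajectory. Well-definedness requires $p_{\mathrm{pri},t},p_{\mathrm{ctx},t}>0$ on $\{c,s\}$, which I will assume as in Definition~\ref{def:pairwise-logodds}.
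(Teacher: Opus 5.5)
Your proposal is correct and matches the paper's proof: the paper also rewrites the affine log-odds of Proposition~\ref{prop:pairwise-reversal}, anchored at $\tau=1$, as $\ell^{\mathrm{ctx}}_{c,s} + (\tau-1)\Delta_{c,s}$, notes that the displacement $(\tau-1)|\Delta_{c,s}|$ grows linearly without bound, and reads the three cases off the sign of $\Delta_{c,s}$. Your caveat is a fair sharpening that the paper leaves implicit: over-generation and early stopping are per-step statements about odds, not proven claims about realized sequence length.
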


The two corollaries reveal a two-level asymmetry. At the \emph{answer level}, interpolation confines log-odds to the convex hull of the two endpoints: it corrects the prior in $\mathcal{S}_{\mathrm{cor}}$, never overshoots either source, and is stable in $\mathcal{S}_{\mathrm{agr}}$. Extrapolation extends log-odds past $p_{\mathrm{ctx}}$ without bound, amplifying the wrong preference in $\mathcal{S}_{\mathrm{res}}$. At the \emph{generation level}, this yields over-generation ($\Delta{c,s} > 0$) or early termination ($\Delta_{c,s} < 0$). No static $\tau > 1$ is therefore universally safe: the strength that corrects in $\mathcal{S}_{\mathrm{cor}}$ catastrophically amplifies errors in $\mathcal{S}_{\mathrm{res}}$ and distorts generation length.
\section{TriState-Bench: Model-Aware Tri-State Evaluation Protocol}
\label{sec:benchmark_pipeline}

To observe the conflict states in Corollary~\ref{cor:regime-state}, we construct TriState-Bench: a model-independent fact repository $\mathcal{F}$ paired with per-model tri-state labels.

\paragraph{Step 1: Fact Repository (Model-independent).} As shown in Figure~\ref{fig:benchmark_pipeline}, each fact $f_i \in \mathcal{F}$ contains a ground-truth answer $a_i$ with alias set $\mathcal{A}_i$, an answer type drawn from three categories: person, location, and scientific term, three question variants $q_i^{(1..3)}$ targeting the same $a_i$, and a correct/incorrect context pair $c_i^+/c_i^-$ linked through an alternate entity $\tilde{a}_i$. Facts are bucket-sampled from DBpedia, grounded by Wikipedia, and rewritten by an LLM (details in Appendix~\ref{app:fact_construction}).

\paragraph{Step 2: Prior Calibration (Per-model).} The target model $M$ decodes the three questions of each fact $f_i$ without context, combining a greedy hard gate with stochastic sampling. A fact is assigned to $\mathcal{F}_{\mathrm{right}}^M$ if all three questions are matched, to $\mathcal{F}_{\mathrm{wrong}}^M$ if all are missed, or excluded as uncertain otherwise. The precise calibration rules are given in Appendix~\ref{app:prior_calibration}.

\paragraph{Step 3: Benchmark Assembly (per-model).} Given the prior labels from Step 2, we revisit the fact repository and assemble tri-state samples by pairing each fact with the corresponding pre-generated context.

\begin{itemize}
  \item \textbf{Correction $\mathcal{S}_{\mathrm{cor}}$} (prior wrong, context right): $f_i \in \mathcal{F}_{\text{wrong}}^M$ paired with $c_i^+$, targeting correction capability.
  \item \textbf{Resistance $\mathcal{S}_{\mathrm{res}}$} (prior right, context wrong): $f_i \in \mathcal{F}_{\text{right}}^M$ paired with $c_i^-$, targeting prior preservation.
  \item \textbf{Agreement $\mathcal{S}_{\mathrm{agr}}$} (prior right, context right): the same $f_i \in \mathcal{F}_{\text{right}}^M$ paired with $c_i^+$, targeting generation stability.
\end{itemize}

The doubly-wrong case (prior wrong, context wrong) is excluded as it lies outside the power family's scope. Finally, we construct 6{,}471 facts and sample 400 per benchmark under the inference budget.

\begin{figure*}[t]
\centering
  \includegraphics[width=1\linewidth]{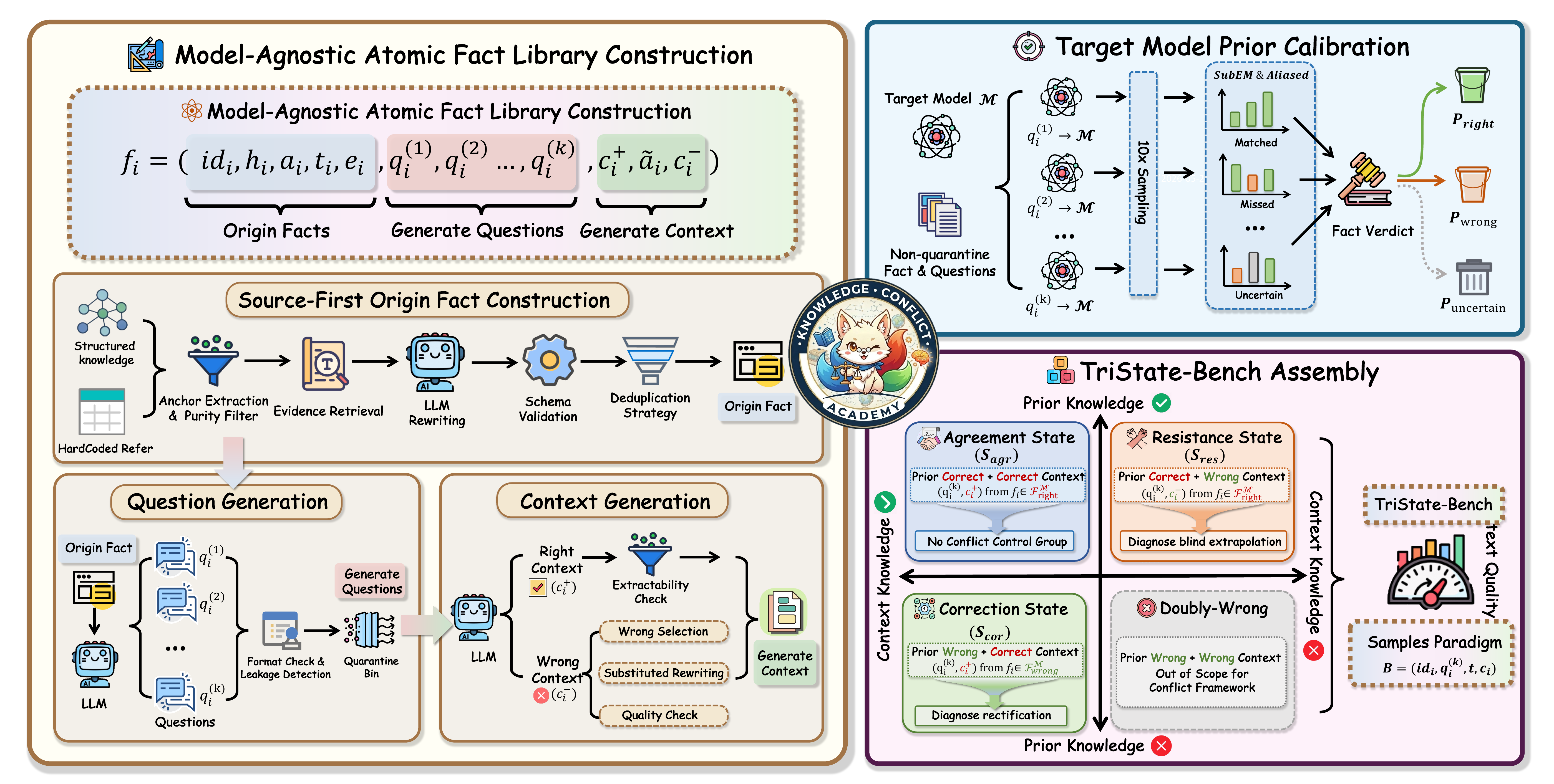}
  \caption{Overview of TriState-Bench pipeline.}
  \label{fig:benchmark_pipeline}
\end{figure*}
\section{Adaptive Regime Routing (ARR)}
\label{sec:arr}
The regime asymmetry established in Section~\ref{sec:regime_asymmetry} imposes two design requirements on $\mathcal{F}_t$ in Eq.~\ref{eq:conflict_aware}:

\begin{itemize}
    \item \textbf{Bidirectional routing with a directional gate.} Corollary~\ref{cor:regime-state} shows that interpolation is needed for $\mathcal{S}_{\mathrm{res}}$  
and $\mathcal{S}_{\mathrm{agr}}$, while extrapolation is needed for $\mathcal{S}_{\mathrm{cor}}$. The method must route $\tau$ to both sides of 1, which in turn    
requires a gate that resolves which side is trustworthy, not merely that conflict exists.
    \item \textbf{Bounded strength.} Corollary~\ref{cor:generation-mode} shows that unbounded $\tau$ causes length distortion (over-generation or early termination).   
The contrastive strength must remain bounded.
\end{itemize}

ARR instantiates the conflict-aware paradigm under these requirements with a binary gate, a bounded divergence measure, and a compositional routing rule. 

\paragraph{Gate.}   
A binary signal $d_t \in \{0, 1\}$ routes between interpolation and extrapolation. Candidate gate signals fall into two categories: \emph{divergence magnitude}
signals that detect conflict existence without resolving direction, and \emph{confidence asymmetry} signals that additionally indicate which side is more committed. Since the gate must distinguish $\mathcal{S}_{\mathrm{cor}}$ from $\mathcal{S}_{\mathrm{res}}$, directionality is essential. 
\begin{equation}
d_t = \mathbb{1}\!\left[\,
g_t > 0
\,\right],
\label{eq:arr-gate}
\end{equation}
We adopt max-probability gap as the gate in this work: 
\begin{equation}
    g_t = \max_y p_{\mathrm{ctx},t}(y) - \max_y p_{\mathrm{pri},t}(y).
\end{equation}
Intuitively, a higher top-1 probability indicates that the distribution commits more mass to a single token.
When the context side is more committed than the prior, it signals that the context has formed a concentrated prediction, characteristic of $\mathcal{S}_{\mathrm{cor}}$, where the context points toward the correct answer. Empirical validation
against alternative signals is provided in Section~\ref{sec:gate_validation}.

\paragraph{Strength.}
A bounded signal $s_t \in [0,1]$ controls the contrastive magnitude. We use the normalized Jensen--Shannon divergence:
\begin{equation}
s_t = \mathrm{JSD}\bigl(
p_{\mathrm{ctx},t}\,\|\,p_{\mathrm{pri},t}
\bigr) / \log 2.
\label{eq:arr-strength}
\end{equation}
Unlike a fixed contrastive weight, $\tau$ varies continuously with per-token conflict: strong contrast where the two distributions sharply disagree, near-zero adjustment where they align, removing the need for a single hyperparameter to cover all positions. We adopt JSD rather than KL divergence because the gate already resolves directionality; the strength need only quantify how far both sides deviate from their midpoint without encoding a directional preference. JSD's symmetric formulation naturally fills this role.

\paragraph{Routing Rule.}
The gate and strength compose into a mixing coefficient and decoding distribution:
\begin{equation}
\tau_t = 1 + (2 d_t - 1)\, s_t, \label{eq:arr-tau}
\end{equation}
\begin{equation}
    q_{\tau_t,t}(y) \propto p_{\mathrm{pri},t}(y)^{1-\tau_t}\, p_{\mathrm{ctx},t}(y)^{\tau_t}.
\end{equation}
$d_t{=}1$ extrapolates ($\tau_t\in[1,2]$); $d_t{=}0$ interpolates ($\tau_t\in[0,1]$). Unlike existing methods restricted to $\tau \ge 1$, ARR can also pull back    
toward the prior ($\tau_t < 1$).
\section{Experiments and Results}
\subsection{Experimental Setup}
\label{sec:setup}

\paragraph{Datasets and Metrics.}
We evaluate on two sets of benchmarks. The first group consists of four established short-form QA datasets: Natural Questions (NQ;~\citealp{kwiatkowski2019natural}), TriviaQA~\citep{joshi2017triviaqa}, HotpotQA~\citep{yang2018hotpotqa}, and the tabular dataset TabMWP~\citep{lu2022dynamic}. Together, they represent a context-faithful QA setting where the gold context is trustworthy and parametric priors play a minimal role (NQ, TriviaQA), a multi-hop setting that requires aggregating evidence across documents (HotpotQA), and a structured-context setting that requires numerical reasoning over tables (TabMWP).

The second group is our TriState-Bench, which evaluates the three conflict states: correction ($\mathcal{S}_{\mathrm{cor}}$), resistance ($\mathcal{S}_{\mathrm{res}}$), and agreement ($\mathcal{S}_{\mathrm{agr}}$). Each state contains 400 benchmark samples. Figure~\ref{fig:prior_combined} reports the share of $\mathcal{P}_{\mathrm{right}}$, $\mathcal{P}_{\mathrm{uncertain}}$, and $\mathcal{P}_{\mathrm{wrong}}$ for each model on the same fact pool. The composition varies noticeably: $\mathcal{P}_{\mathrm{wrong}}$ ranges from 11.1\% to 20.3\%, and Person-related facts dominate both correct and incorrect priors, while Location facts are disproportionately well-known and Scientific Terms disproportionately unknown. These patterns confirm the necessity of our per-model prior calibration.

\begin{figure}[t]
\centering
  \includegraphics[width=\linewidth]{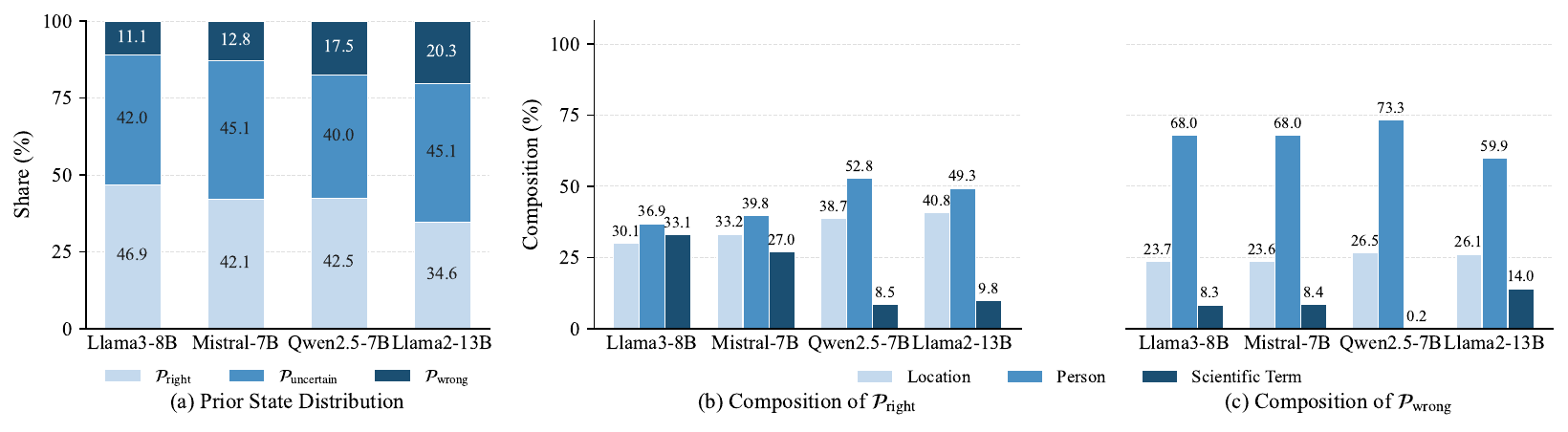}
  \caption{Distribution of prior knowledge states across four LLMs. (a) Prior state distribution showing the proportion of facts where each model holds correct ($\mathcal{P}_{\text{right}}$), uncertain  ($\mathcal{P}_{\text{uncertain}}$), or incorrect ($\mathcal{P}_{\text{wrong}}$) priors. (b–c) Entity-type composition of $\mathcal{P}_{\text{right}}$ and $\mathcal{P}_{\text{wrong}}$.}
  \label{fig:prior_combined}
\end{figure}

We report Exact Match (EM) and token-level F1 on all benchmarks. For additional details of QA datasets, refer to Appendix~\ref{app:qa-datasets}.

\paragraph{Source of Context.}
For NQ, TriviaQA, and HotpotQA we use the provided gold context; for TabMWP, the table is the context and the problem statement is the query. For TriState-Bench, the gold context is a web-retrieved evidence block grounded to the canonical answer, and the corrupted context swaps the gold span for a type-matched distractor. The examples of $(x, c)$ are shown in Table~\ref{tab:dataset_examples}; prompts are listed in Appendix~\ref{app:prompts}.

\paragraph{Models.}
We conduct experiments on four open-weight families, each in base and instruction-tuned variants: Llama2-13B~\citep{touvron2023llama}, Llama3-8B~\citep{grattafiori2024llama}, Mistral-7B~\citep{jiang2023mistral7b}, and Qwen2.5-7B~\citep{qwen2025qwen25technicalreport}.
Across this set, we first test whether the regime asymmetry of Corollary~\ref{cor:regime-state} manifests empirically, and then whether ARR remains robust across families and instruction tuning.

\paragraph{Baselines.}
We compare ARR against five test-time decoding baselines: Greedy Decoding; CAD~\citep{shi2024trusting} ($\alpha=1$); COIECD~\citep{yuan2024discerning} ($\lambda=0.25$, inner $\alpha=1$); AdaCAD~\citep{wang2025adacad} (JSD-driven $\alpha_t$); and CoCoA~\citep{khandelwal2025cocoa} (R\'enyi order $0.5$, peakedness weight $z=5.0$, entropy-gap weight $\gamma=1.0$).
To further probe how regime choice affects each conflict state, we also include Greedy-no-ctx, which decodes without context as a prior-only reference, together with sweeps of simple interpolation ($\tau\in\{0.25,0.5,0.75,1.0\}$) and tuned CAD ($\alpha\in\{0.25,0.5,0.75,1.0\}$).

\subsection{Main Results}

\begin{table*}[t]
\centering
\scriptsize
\setlength{\tabcolsep}{3.2pt}
\renewcommand{\arraystretch}{1.08}
\resizebox{0.98\textwidth}{!}{%
\begin{tabular}{ll*{6}{cc}}
\toprule
\multirow{2}{*}{Model} & \multirow{2}{*}{Method}
& \multicolumn{2}{c}{NQ}
& \multicolumn{2}{c}{TabMWP}
& \multicolumn{2}{c}{HotpotQA}
& \multicolumn{2}{c}{TriviaQA}
& \multicolumn{2}{c}{TriState}
& \multicolumn{2}{c}{\textbf{Avg.}} \\
\cmidrule(lr){3-4}
\cmidrule(lr){5-6}
\cmidrule(lr){7-8}
\cmidrule(lr){9-10}
\cmidrule(lr){11-12}
\cmidrule(lr){13-14}
& & EM & F1 & EM & F1 & EM & F1 & EM & F1 & EM & F1 & EM & F1 \\
\midrule

\multirow{6}{*}{\textbf{Llama3-8B}}
& Greedy & 31.61 & 47.68 & 29.50 & 35.31 & 18.56 & 33.06 & 54.40 & 64.46 & 42.92 & 51.49 & 35.40 & 46.40 \\
& CAD    & 14.76 & 30.74 & 12.60 & 16.40 & 5.44  & 15.65 & 18.55 & 27.46 & 14.58 & 28.70 & 13.19 & 23.79 \\
& COIECD & 22.26 & 39.81 & 19.10 & 23.04 & 12.05 & 25.93 & 33.95 & 43.39 & 22.33 & 34.87 & 21.94 & 33.41 \\
& AdaCAD & 28.33 & 44.93 & 29.70 & \textbf{35.34} & 16.84 & 31.08 & 51.25 & 61.63 & 34.42 & 44.76 & 32.11 & 43.55 \\
& CoCoA  & 22.23 & 38.99 & 25.10 & 30.66 & 10.56 & 23.59 & 39.00 & 48.86 & 20.75 & 33.70 & 23.53 & 35.16 \\
& \cellcolor{gg}\textbf{ARR(Ours)} & \textbf{36.37} & \textbf{51.58} & \textbf{30.00} & 35.16 & \textbf{19.03} & \textbf{33.11} & \textbf{56.10} & \textbf{66.50} & \textbf{61.67} & \textbf{68.14} & \cellcolor{gg}\textbf{40.63} & \cellcolor{gg}\textbf{50.90} \\
\midrule

\multirow{6}{*}{\textbf{Mistral-7B}}
& Greedy & \textbf{39.25} & \textbf{50.31} & 7.90  & 11.18 & \textbf{19.54} & 28.22 & 56.45 & 64.71 & 59.75 & 65.17 & 36.58 & 43.92 \\
& CAD    & 15.84 & 22.76 & 5.70  & 7.46  & 4.62  & 7.21  & 23.85 & 32.98 & 31.92 & 37.15 & 16.39 & 21.51 \\
& COIECD & 35.98 & 46.01 & \textbf{10.20} & \textbf{12.30} & 16.30 & 23.10 & 41.75 & 48.01 & 52.92 & 57.88 & 31.43 & 37.46 \\
& AdaCAD & 36.86 & 46.52 & 3.70  & 6.40  & 18.87 & 26.87 & 54.90 & 63.29 & 58.83 & 64.09 & 34.63 & 41.43 \\
& CoCoA  & 13.86 & 29.24 & 6.20  & 8.21  & 1.19  & 9.27  & 23.85 & 32.98 & 29.33 & 41.76 & 14.89 & 24.29 \\
& \cellcolor{gg}\textbf{ARR(Ours)} & 36.90 & 49.36 & 6.10  & 10.05 & 18.80 & \textbf{28.41} & \textbf{57.65} & \textbf{65.87} & \textbf{64.17} & \textbf{70.28} & \cellcolor{gg}\textbf{36.72} & \cellcolor{gg}\textbf{44.79} \\
\midrule

\multirow{6}{*}{\textbf{Qwen2.5-7B}}
& Greedy & \textbf{60.35} & \textbf{70.87} & 37.50 & 39.68 & 28.98 & \textbf{40.14} & 39.15 & 47.63 & 58.83 & 64.71 & \textbf{44.96} & \textbf{52.61} \\
& CAD    & 21.85 & 32.99 & 13.60 & 15.71 & 3.69  & 12.63 & 7.00  & 16.72 & 20.75 & 37.38 & 13.38 & 23.09 \\
& COIECD & 53.22 & 65.81 & 36.50 & 39.63 & 18.39 & 29.35 & 25.80 & 35.43 & 41.00 & 51.92 & 34.98 & 44.43 \\
& AdaCAD & 57.21 & 68.71 & \textbf{37.80} & \textbf{40.27} & 24.13 & 34.56 & 33.50 & 42.19 & 54.75 & 61.67 & 41.48 & 49.48 \\
& CoCoA  & 33.72 & 50.36 & 29.90 & 33.18 & 6.82  & 17.66 & 14.65 & 25.72 & 19.17 & 34.40 & 20.85 & 32.26 \\
& \cellcolor{gg}\textbf{ARR(Ours)} & 48.38 & 61.20 & 37.00 & 39.54 & \textbf{29.98} & 40.00 & \textbf{44.90} & \textbf{53.55} & \textbf{63.25} & \textbf{68.76} & \cellcolor{gg}44.70 & \cellcolor{gg}\textbf{52.61} \\
\midrule

\multirow{6}{*}{\textbf{Llama2-13B}}
& Greedy & 46.54 & 59.60 & 21.10 & 23.41 & \textbf{23.77} & 34.44 & \textbf{59.00} & \textbf{67.89} & 60.67 & 65.40 & 42.22 & 50.15 \\
& CAD    & 30.41 & 48.12 & 14.80 & 17.82 & 10.57 & 19.74 & 38.05 & 49.57 & 46.33 & 54.61 & 28.03 & 37.97 \\
& COIECD & 44.94 & 60.42 & 18.70 & 22.12 & 21.05 & 32.21 & 49.20 & 60.06 & 55.00 & 60.59 & 37.78 & 47.08 \\
& AdaCAD & \textbf{47.80} & \textbf{60.86} & \textbf{21.70} & \textbf{24.51} & 23.61 & \textbf{34.49} & 58.45 & 67.45 & 60.08 & 64.96 & \textbf{42.33} & \textbf{50.45} \\
& CoCoA  & 43.02 & 58.36 & 18.50 & 21.55 & 18.50 & 29.08 & 51.55 & 61.92 & 55.75 & 61.34 & 37.46 & 46.45 \\
& \cellcolor{gg}\textbf{ARR(Ours)} & 43.43 & 57.19 & 21.20 & 22.88 & 23.08 & 33.69 & 58.95 & 67.62 & \textbf{64.92} & \textbf{69.37} & \cellcolor{gg}42.32 & \cellcolor{gg}50.15 \\

\bottomrule
\end{tabular}%
}
\caption{Performance comparison across benchmarks. Each benchmark reports EM and F1. The \textbf{Avg.}\ column is the arithmetic mean over all five benchmarks (NQ, TabMWP, HotpotQA, TriviaQA, and TriState-Bench). Bold marks the best value within each model block.}
\label{tab:qa_result}
\end{table*}

\paragraph{Main results.}
Table~\ref{tab:qa_result} reports performance on the four standard QA benchmarks together with TriState-Bench. The standard QA columns test whether a conflict-aware decoder remains safe when the resistance subset $\mathcal{S}_{\mathrm{res}}$ is rare; TriState-Bench measures aggregate conflict-resolution ability.

On standard QA, existing context-aware baselines regress substantially relative to Greedy decoding: CAD drops Llama3-8B NQ from $31.61$ to $14.76$ EM, and CoCoA reduces Qwen2.5-7B HotpotQA from $28.98$ to $6.82$. Both failures instantiate the over-generation mode predicted by Corollary~\ref{cor:generation-mode}: contrastive extrapolation pushes log-odds past $p_{\mathrm{ctx},t}$ even when the context is already correct. ARR matches or exceeds Greedy on most dataset--model pairs and is the best contrastive decoder on average for Llama3-8B, Mistral-7B, and Llama2-13B, remaining within $0.26$ EM of Greedy on Qwen2.5-7B. No other context-aware method avoids regression against Greedy across all four models. On TriState-Bench, ARR leads every baseline for all four base models.

\paragraph{TriState-Bench decomposition.}
Table~\ref{tab:tristate_result} decomposes TriState-Bench into correction ($\mathcal{S}_{\mathrm{cor}}$), resistance ($\mathcal{S}_{\mathrm{res}}$), and agreement ($\mathcal{S}_{\mathrm{agr}}$). All context-aware baselines collapse on $\mathcal{S}_{\mathrm{res}}$: across the four models, every contrastive method scores below $3.5$ EM, with CAD and CoCoA below $1$. This is the empirical signature of regime asymmetry: a strength $\tau>1$ that corrects on $\mathcal{S}_{\mathrm{cor}}$ simultaneously amplifies the wrong preference on $\mathcal{S}_{\mathrm{res}}$. ARR is the only method that recovers $\mathcal{S}_{\mathrm{res}}$, lifting EM to $15.75$--$33.25$ across models, an order of magnitude above the strongest baseline. The gain stems from switching the gate to $\tau<1$ whenever $p_{\mathrm{ctx},t}$ is less committed than $p_{\mathrm{pri},t}$ (Theorem~\ref{thm:interpolation}). Crucially, this does not sacrifice $\mathcal{S}_{\mathrm{cor}}$ or $\mathcal{S}_{\mathrm{agr}}$: ARR is best or near-best on both subsets across all four models. More results appear in Tables~\ref{tab:tristate_result_full_1}, ~\ref{tab:tristate_result_full_2} and Appendix~\ref{app:instruct-experiments}.

\begin{table*}[t]
\centering
\small
\resizebox{0.72\textwidth}{!}{%
\begin{tabular}{ll*{3}{cc}}
\toprule
\multirow{2}{*}{Model} & \multirow{2}{*}{Method}
& \multicolumn{2}{c}{$\mathcal{S}_{\mathrm{cor}}$}
& \multicolumn{2}{c}{$\mathcal{S}_{\mathrm{res}}$}
& \multicolumn{2}{c}{$\mathcal{S}_{\mathrm{agr}}$} \\
\cmidrule(lr){3-4}
\cmidrule(lr){5-6}
\cmidrule(lr){7-8}
& & EM & F1 & EM & F1 & EM & F1 \\
\midrule

\multirow{6}{*}{\textbf{Llama3-8B}}
& Greedy & 59.50 & 70.79 & 4.50  & 9.78  & 64.75 & 73.92 \\
& CAD    & 21.75 & 41.07 & 0.75  & 7.11  & 21.25 & 37.92 \\
& COIECD & 30.25 & 47.35 & 1.75  & 6.29  & 35.00 & 50.99 \\
& AdaCAD & 51.50 & 64.19 & 3.25  & 9.67  & 48.50 & 60.43 \\
& CoCoA  & 31.25 & 47.88 & 1.00  & 7.68  & 30.00 & 45.52 \\
 & \cellcolor{gg}\textbf{ARR(Ours)} & \textbf{75.00} & \textbf{82.68} & \textbf{33.25} & \textbf{38.59} & \textbf{76.75} & \textbf{83.15}\\
\midrule

\multirow{6}{*}{\textbf{Mistral-7B}}
& Greedy & \textbf{91.25} & \textbf{94.75} & 5.25  & 12.99 & 82.75 & 87.76 \\
& CAD    & 70.50 & 78.48 & 0.25  & 5.47  & 25.00 & 27.51 \\
& COIECD & 85.00 & 89.46 & 1.00  & 8.61  & 72.75 & 75.56 \\
& AdaCAD & 91.00 & 94.25 & 2.25  & 10.06 & \textbf{83.25} & \textbf{87.95} \\
& CoCoA  & 55.50 & 70.10 & 0.50  & 6.98  & 32.00 & 48.19 \\
&  \cellcolor{gg}\textbf{ARR(Ours)} & 88.00 & 92.73 & \textbf{22.50} & \textbf{30.66} & 82.00 & 87.46 \\
\midrule

\multirow{6}{*}{\textbf{Qwen2.5-7B}}
& Greedy & \textbf{89.50} & \textbf{93.32} & 1.00  & 10.04 & 86.00 & 90.76 \\
& CAD    & 25.75 & 49.65 & 0.25  & 6.30  & 36.25 & 56.18 \\
& COIECD & 65.75 & 76.63 & 0.50  & 7.19  & 56.75 & 71.96 \\
& AdaCAD & 81.25 & 87.50 & 0.75  & 8.94  & 82.25 & 88.56 \\
& CoCoA  & 26.50 & 46.41 & 0.25  & 6.19  & 30.75 & 50.60 \\
&  \cellcolor{gg}\textbf{ARR(Ours)} & 85.75 & 91.09 & \textbf{15.75} & \textbf{22.92} & \textbf{88.25} & \textbf{92.27} \\
\midrule

\multirow{6}{*}{\textbf{Llama2-13B}}
& Greedy & \textbf{86.50} & \textbf{90.79} & 1.75  & 9.58  & \textbf{93.75} & \textbf{95.83} \\
& CAD    & 66.00 & 75.74 & 0.50  & 7.75  & 72.50 & 80.35 \\
& COIECD & 80.00 & 84.91 & 1.00  & 8.62  & 84.00 & 88.25 \\
& AdaCAD & 85.25 & 89.83 & 1.75  & 9.65  & 93.25 & 95.40 \\
& CoCoA  & 80.25 & 85.44 & 1.50  & 9.13  & 85.50 & 89.46 \\
&  \cellcolor{gg}\textbf{ARR(Ours)} & 84.25 & 89.16 & \textbf{17.00} & \textbf{23.21} & \textbf{93.75} & 95.81 \\

\bottomrule
\end{tabular}%
}
\caption{Performance on the three TriState-Bench subsets. $\mathcal{S}_{\mathrm{cor}}$ (Correction): gold context, prior incorrect; $\mathcal{S}_{\mathrm{res}}$ (Resistance): corrupted context, prior correct; $\mathcal{S}_{\mathrm{agr}}$ (Agreement): gold context, prior correct. ARR consistently dominates the resistance subset $\mathcal{S}_{\mathrm{res}}$ where all baselines collapse, while staying competitive on $\mathcal{S}_{\mathrm{cor}}$ and $\mathcal{S}_{\mathrm{agr}}$. Bold marks the best value within each model block.}
\label{tab:tristate_result}
\end{table*}

\subsection{Gate Validation}
\label{sec:gate_validation}
The gate (Eq.~\ref{eq:arr-gate}) routes between interpolation and extrapolation based on whether the context is more committed than the prior. We validate this choice by comparing four candidate signals, grouped by whether they carry directional information:

\paragraph{Magnitude-only (detect conflict existence).}
\begin{itemize}
  \item \textbf{A}: $\arg\max p_{\text{ctx}} \neq \arg\max p_{\text{pri}}$ (top-1 token differs).
  \item \textbf{B}: $\text{JSD}(p_{\text{ctx}}\,\Vert\,p_{\text{pri}}) > 0.5$ (distributions are sharply divergent).
\end{itemize}

\paragraph{Confidence-asymmetry (resolve conflict direction).}
\begin{itemize}
  \item \textbf{C}: $H(p_{\text{pri}}) > H(p_{\text{ctx}})$ (context sharpens the distribution).
  \item \textbf{D}: $\max p_{\text{ctx}} > \max p_{\text{pri}}$ (context raises top-1 confidence).
\end{itemize}

\begin{wrapfigure}{r}{0.48\textwidth}
  \centering
  \vspace{-12pt}
  \includegraphics[width=0.48\textwidth]{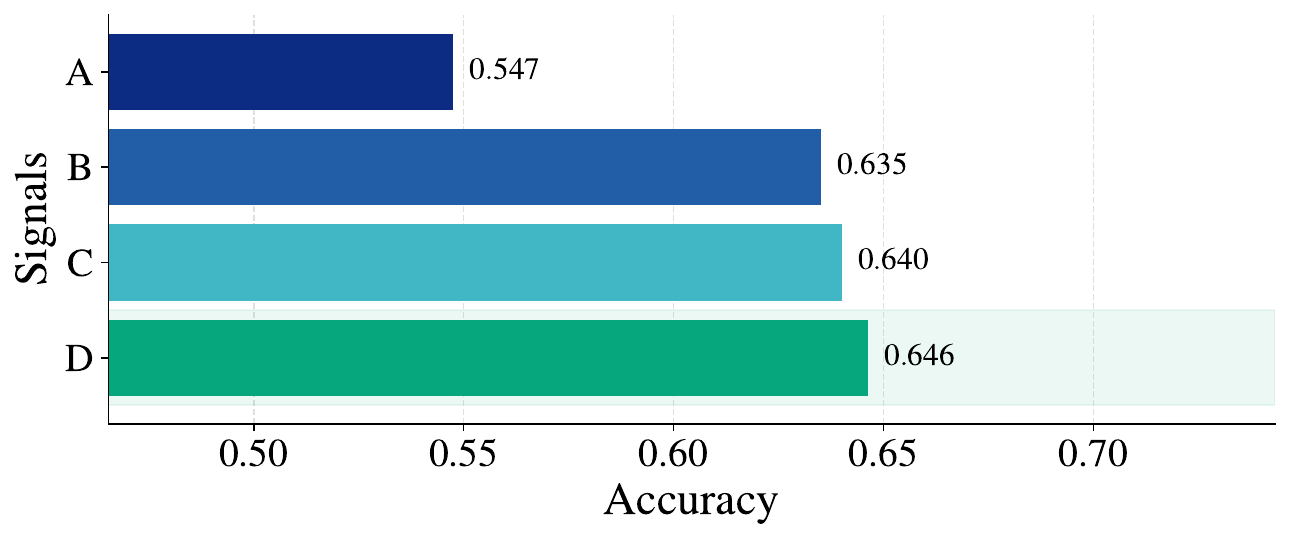}
  \caption{Gate accuracy of each candidate signal. Confidence-asymmetry signals (C, D) consistently outperform magnitude-only signals (A, B), confirming that directionality is necessary for regime separation.}
  \label{fig:signal_comparison}
  \vspace{-8pt}
\end{wrapfigure}

Magnitude-only signals fire whenever $p_{\text{pri},t}$ and $p_{\text{ctx},t}$ diverge but cannot determine which side is more reliable; signal A, in particular, barely exceeds the chance rate of 0.5. Confidence-asymmetry signals additionally resolve direction: a positive value indicates that the context concentrates more mass on its top prediction than the prior does. Because $\mathcal{S}_{\mathrm{cor}}$ and $\mathcal{S}_{\mathrm{res}}$ differ precisely in this asymmetry (Corollary~\ref{cor:regime-state}), only directional signals can reliably separate the two regimes. Figure~\ref{fig:signal_comparison} confirms this: both C and D surpass B by a clear margin, while A hovers near chance, indicating that detecting conflict alone is insufficient without resolving its direction.

\subsection{Ablation Study}
\begin{wraptable}{r}{0.48\textwidth}
\centering
\small
\setlength{\tabcolsep}{3.5pt}
\renewcommand{\arraystretch}{1.05}
\begin{tabular}{@{}l@{\hspace{3pt}}ccc@{}}
\toprule
Method & \textbf{Trad QA} & \textbf{TriState} & \textbf{Avg.} \\
\midrule
Greedy & \textbf{41.5}/\textbf{49.6} & 58.8/64.7 & \textbf{45.0}/\textbf{52.6} \\
CAD & 11.5/19.5 & 20.8/37.4 & 13.4/23.1 \\
COIECD & 33.5/42.6 & 41.0/51.9 & 35.0/44.4 \\
AdaCAD & 38.2/46.4 & 54.8/61.7 & 41.5/49.5 \\
CoCoA & 21.3/31.7 & 19.2/34.4 & 20.9/32.3 \\
\midrule
\textbf{ARR-JS (Ours)} & 40.1/48.6 & \textbf{63.3}/\textbf{68.8} & 44.7/\textbf{52.6} \\
ARR-KL & 35.8/44.1 & 62.1/68.0 & 41.0/48.9 \\
ARR-D & 37.8/46.3 & 59.9/65.8 & 42.2/50.2 \\
\bottomrule
\end{tabular}
\caption{Ablation results on strength on Qwen2.5-7B (EM/F1).}
\label{tab:ablation_result}
\vspace{-0.5em}
\end{wraptable}
Having validated the gate signal in Section~\ref{sec:gate_validation}, we fix $d_t = \mathbb{1}[\max p_{\mathrm{ctx},t} > \max p_{\mathrm{pri},t}]$ and vary the strength function $s_t$ among three choices: normalized JSD (ARR-JS, our default), $1 - \exp(-\mathrm{KL})$ (ARR-KL), and a constant $s_t \equiv 0.5$ (ARR-D). As shown in Table~\ref{tab:ablation_result}, all three variants surpass every baseline on TriState-Bench; even ARR-D reaches $59.9$ EM, above AdaCAD ($54.8$). The gain on $\mathcal{S}_{\mathrm{res}}$ therefore stems from the gate direction rather than the strength formula, confirming that explicitly routing between interpolation and extrapolation matters more than the magnitude of the adjustment. On Trad QA, however, the formula does matter: ARR-JS stays within $1.4$ EM of Greedy ($40.1$ vs.\ $41.5$), the smallest degradation among all contrastive methods, whereas ARR-KL and ARR-D drop by $5.7$ and $3.7$ EM respectively. Balancing TriState-Bench gains with minimal Trad QA regression, we adopt ARR-JS as the default.

\subsection{Case Study: Cross-Model Commonalities and Divergences}
\label{sec:case-study}

\begin{figure*}[t!]
\centering
  \includegraphics[width=\linewidth]{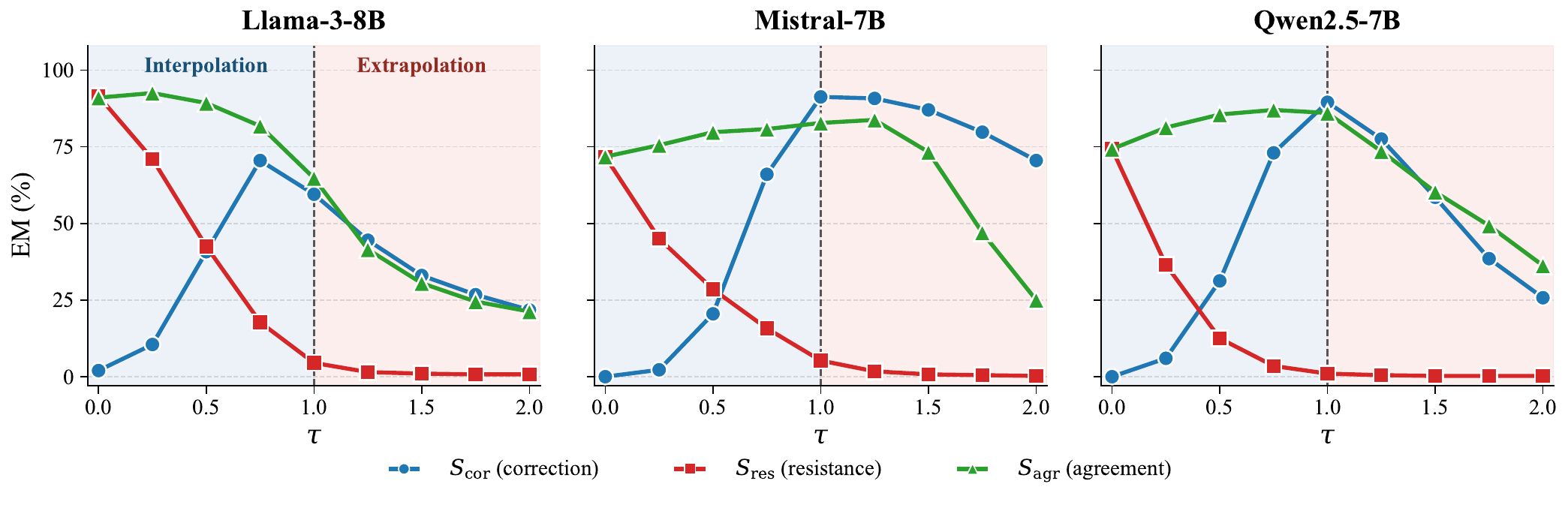}
  \caption{Blue/red shading separates the interpolation ($\tau\in[0,1]$) and extrapolation ($\tau>1$) regimes. $\mathcal{S}_{\mathrm{res}}$ decays monotonically with $\tau$; $\mathcal{S}_{\mathrm{cor}}$ peaks near $\tau\approx 1$ and collapses under extrapolation, with model-dependent severity.}
  \label{fig:threshold_change}
\end{figure*}

Figure~\ref{fig:threshold_change} plots EM against $\tau$ on the full TriState-Bench for Llama3-8B, Mistral-7B and Qwen2.5-7B (more results in Figure~\ref{fig:threshold_change_full}; detailed cases in Figure~\ref{fig:detailed_cases}). The empirical patterns divide into two cross-model commonalities and one class of model-specific divergences.

\paragraph{Commonality 1: threshold reversal within interpolation.}
On every model, $\mathcal{S}_{\mathrm{cor}}$ EM rises sharply within the interpolation regime rather than climbing gradually. Llama3-8B jumps from 2\% at $\tau{=}0$ to 71\% at $\tau{=}0.75$; Mistral-7B from 0\% to 66\%; Qwen2.5-7B from 0\% to 73\%; Llama3-Instruct from 0\% to 92\%. Symmetrically, $\mathcal{S}_{\mathrm{res}}$ EM drops steeply over the same interval (Llama3-8B: 92\%$\to$18\%; Mistral: 72\%$\to$16\%; Qwen: 74\%$\to$4\%; Instruct: 82\%$\to$5\%).

This pattern is the aggregate signature of Corollary~\ref{cor:regime-state}. Each sample carries a pairwise reversal threshold $\tau^\star_{a,b} \in (0,1)$ at which the decoded distribution flips its preference between the correct token $a$ and the distractor $b$. As $\tau$ sweeps upward, progressively more $\mathcal{S}_{\mathrm{cor}}$ samples cross their individual $\tau^\star$ and switch from wrong to right, while progressively more $\mathcal{S}_{\mathrm{res}}$ samples cross theirs and switch from right to wrong. The steepness of the curves reflects the concentration of $\tau^\star$ values: on Llama3-Instruct, most thresholds cluster near $\tau \approx 0.5$, producing an almost step-function rise. No single $\tau$ simultaneously resolves all samples, confirming that interpolation exposes an irreducible trade-off between correction power and resistance preservation.

\paragraph{Commonality 2: structural collapse of $\mathcal{S}_{\mathrm{res}}$ under extrapolation.}
Once $\tau$ crosses 1, $\mathcal{S}_{\mathrm{res}}$ EM drops to near zero on every model and stays flat across the entire extrapolation interval: Llama3-8B holds 0.75\% from $\tau{=}1.25$ to $\tau{=}2$; Mistral-7B drops from 1.75\% to 0.25\%; Qwen2.5-7B from 0.50\% to 0.25\%; Llama3-Instruct at 0.25\% throughout. This is not gradual degradation but a structural cliff: the transition from $\tau{=}0.75$ to $\tau{=}1.25$ collapses $\mathcal{S}_{\mathrm{res}}$ by 15--20$\times$ on every model.

Corollary~\ref{cor:regime-state} explains why. In $\mathcal{S}_{\mathrm{res}}$, the prior supports the correct token ($\ell^{\mathrm{prior}}_{a,b} > 0$) while the context favors the distractor ($\ell^{\mathrm{ctx}}_{a,b} < 0$), placing the reversal threshold at $\tau^\star \in (0,1)$. By $\tau{=}1$ most samples have already flipped; extrapolation then drives the pairwise log-odds further negative without bound, locking in the wrong answer irreversibly. The collapse is not a failure of any particular method but a structural consequence of the power family: \emph{any} member with $\tau > 1$ operates past the point of no return for resistance-state samples.

The canonical instance reproduces on all four models. On \emph{What is the capital of France?} with misleading context \emph{Lyon}, the prior-only output ($\tau{=}0$) returns \emph{Paris}, but greedy decoding ($\tau{=}1$) and every extrapolation method (CAD, AdaCAD, CoCoA, COIECD) return \emph{Lyon}. The same flip recurs on \emph{first president of the United States} (Washington $\to$ Adams), \emph{largest ocean} (Pacific $\to$ Atlantic), and \emph{painter of the Mona Lisa} (Leonardo $\to$ Raphael).

\paragraph{Divergence: one mechanism, three generation-mode failures.}
Corollary~\ref{cor:generation-mode} predicts that extrapolation degrades generation through three routes, depending on where the prior concentrates mass: continuation tokens, over stop tokens, or low-probability noise tokens. Figure~\ref{fig:detailed_failure_cases} illustrates representative cases for each failure mode.

\begin{itemize}
    \item \textbf{Llama3-8B: over-generation.} $\mathcal{S}_{\mathrm{agr}}$ EM drops from 82\% at $\tau{=}0.75$ to 21\% at $\tau{=}2$. On \emph{Which country has the most pyramids?} (gold: \emph{Sudan}), interpolation at $\tau{=}0.75$ outputs a clean \emph{Sudan}, while extrapolation at $\tau{=}2$ copies verbatim from the context: \emph{``Sudan has the most pyramids of any country in the world, with approximately 200 to 255 known pyramids\ldots''}. Once the correct entity has been emitted, the context marginally favors continuation over EOS; as $\tau$ crosses into extrapolation, the negative exponent amplifies this preference, and Llama3-8B's weak prior on EOS provides no counterforce, yielding runaway continuation that worsens with $\tau$ (Figure~\ref{fig:gen_length_all}).
    
    \item \textbf{Qwen2.5-7B: early stopping.} $\mathcal{S}_{\mathrm{agr}}$ EM drops from 87\% at $\tau{=}0.75$ to 36\% at $\tau{=}2$. Extrapolation truncates multi-token answers at their first constituent: \emph{Greenland shark}$\to$\emph{Greenland}, \emph{Alexander Fleming}$\to$\emph{Alexander}, \emph{Sargasso Sea}$\to$\emph{Sargasso}. Qwen's prior places strong mass on EOS and sentence-final punctuation; as $\tau$ increases past~1, the same negative exponent amplifies that mass, triggering premature termination.
    
    \item \textbf{Mistral-7B: distribution collapse.} $\mathcal{S}_{\mathrm{agr}}$ EM drops from 81\% at $\tau{=}0.75$ to 25\% at $\tau{=}2$. As $\tau$ grows, outputs degenerate into repeated underscore strings (\emph{``\_\_\_\_\_\_\_\_...''}), training-corpus residue such as scraped boilerplate (\emph{``\textcopyright~BrainMass Inc.\ brainmass.com October 10, 2019, 1:00 am ad1''}), or bare instruction fragments (\emph{``Instructions:''}). The negative exponent amplifies tokens to which the prior assigns negligible mass; Mistral's residual probability on filler and template fragments from its training corpus is enough for extrapolation to redirect generation toward precisely these tokens.
    \end{itemize}

All three failures trace to one mechanism: as $\tau$ enters the extrapolation regime ($\tau > 1$), the power family drives pairwise log-odds past the context endpoint while the negative exponent reshapes the prior's tail. Which extreme manifests depends on where each model's prior concentrates its residual mass---continuation tokens (Llama3), stop tokens (Qwen), or low-probability noise (Mistral). Corollary~\ref{cor:generation-mode} guarantees that extrapolation degrades generation quality; the prior's structural bias determines the failure mode.

\begin{figure}[t]
    \includegraphics[width=\linewidth]{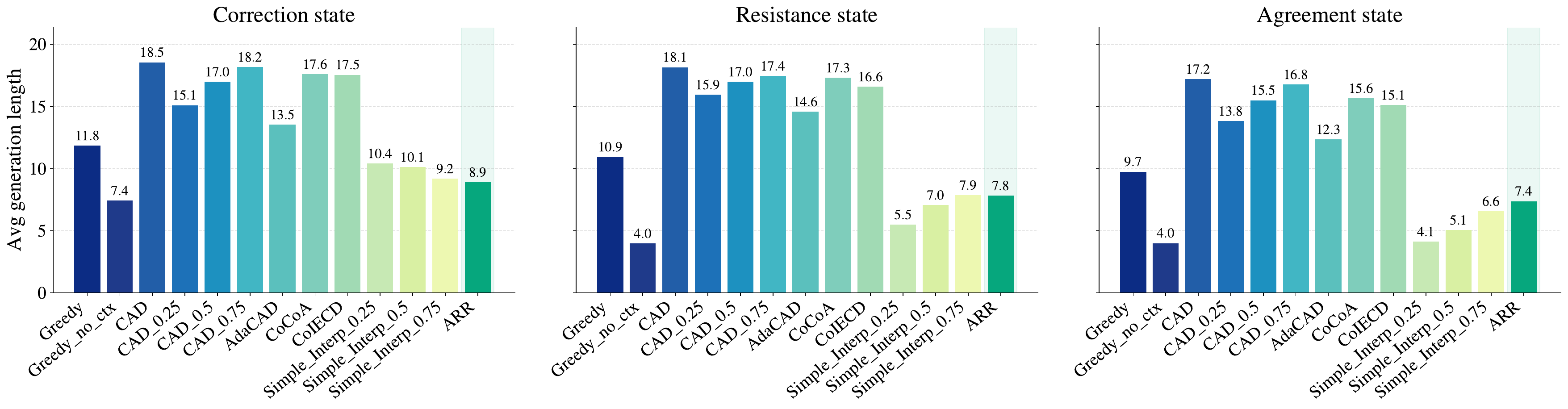}
    \caption{Generation length of Llama3-8B across all methods in correction, resistance, and agreement states.}
    \label{fig:gen_length_all}
\end{figure}

\section{Conclusion}

We generalize existing contrastive decoding methods into a power family $q_{\tau,t} \propto p_{\mathrm{pri},t}^{1-\tau}\,p_{\mathrm{ctx},t}^{\tau}$ and show that the family partitions at $\tau=1$ into interpolation and extrapolation: two structurally distinct regimes. A regime asymmetry emerges: extrapolation amplifies errors unboundedly when the prior is already correct, interpolation under-corrects when the context is correct, and no single static regime covers both directions.
To address this asymmetry, Adaptive Regime Routing (ARR) routes between regimes at each decoding step via a confidence-asymmetry gate, shifting the design question from how much to amplify the context to which side deserves authority, without introducing additional hyperparameters.
To make the gains from routing separately measurable, TriState-Bench conditions conflict states on each model's actual prior knowledge, enabling correction, resistance, and agreement to be independently observed.
However, the current gate relies on a single statistical scalar to estimate context credibility. Extending ARR to multi-source and multi-turn settings, where credibility signals accumulate across retrieval rounds, is a natural next step.
\section*{Limitations}
This work has three limitations.
\emph{Logit access}: routing signals are derived from the output distribution, so ARR does not apply to black-box APIs that expose only generated text (e.g., GPT-4, Claude).
\emph{Coarse credibility estimation}: the gate compresses context trustworthiness into a binary decision from a single scalar (max-probability gap), without modeling source reliability or evidential consistency. This work focuses on establishing the structural necessity of bidirectional regime switching; a more expressive credibility estimator that enables finer-grained routing is a direct next step.
\emph{Language coverage}: experiments are conducted exclusively on English QA; cross-lingual and code-mixed settings remain unevaluated.
\section*{Acknowledgments}
This work was supported by Alibaba Group through Alibaba Research Intern Program.

\clearpage
\bibliographystyle{colm2024_conference}
\bibliography{main}

\begin{thebibliography}{26}
\providecommand{\natexlab}[1]{#1}
\providecommand{\url}[1]{\texttt{#1}}
\expandafter\ifx\csname urlstyle\endcsname\relax
  \providecommand{\doi}[1]{doi: #1}\else
  \providecommand{\doi}{doi: \begingroup \urlstyle{rm}\Url}\fi

\bibitem[Grattafiori et~al.(2024)Grattafiori, Dubey, Jauhri, Pandey, Kadian, Al-Dahle, Letman, Mathur, Schelten, Vaughan, et~al.]{grattafiori2024llama}
Aaron Grattafiori, Abhimanyu Dubey, Abhinav Jauhri, Abhinav Pandey, Abhishek Kadian, Ahmad Al-Dahle, Aiesha Letman, Akhil Mathur, Alan Schelten, Alex Vaughan, et~al.
\newblock The llama 3 herd of models.
\newblock \emph{arXiv preprint arXiv:2407.21783}, 2024.

\bibitem[Jiang et~al.(2023)Jiang, Sablayrolles, Mensch, Bamford, Chaplot, de~las Casas, Bressand, Lengyel, Lample, Saulnier, Lavaud, Lachaux, Stock, Scao, Lavril, Wang, Lacroix, and Sayed]{jiang2023mistral7b}
Albert~Q. Jiang, Alexandre Sablayrolles, Arthur Mensch, Chris Bamford, Devendra~Singh Chaplot, Diego de~las Casas, Florian Bressand, Gianna Lengyel, Guillaume Lample, Lucile Saulnier, Lélio~Renard Lavaud, Marie-Anne Lachaux, Pierre Stock, Teven~Le Scao, Thibaut Lavril, Thomas Wang, Timothée Lacroix, and William~El Sayed.
\newblock Mistral 7b, 2023.
\newblock URL \url{https://arxiv.org/abs/2310.06825}.

\bibitem[Joshi et~al.(2017)Joshi, Choi, Weld, and Zettlemoyer]{joshi2017triviaqa}
Mandar Joshi, Eunsol Choi, Daniel~S Weld, and Luke Zettlemoyer.
\newblock Triviaqa: A large scale distantly supervised challenge dataset for reading comprehension.
\newblock In \emph{Proceedings of the 55th Annual Meeting of the Association for Computational Linguistics (Volume 1: Long Papers)}, pp.\  1601--1611, 2017.

\bibitem[Kasai et~al.(2023)Kasai, Sakaguchi, Le~Bras, Asai, Yu, Radev, Smith, Choi, Inui, et~al.]{kasai2023realtime}
Jungo Kasai, Keisuke Sakaguchi, Ronan Le~Bras, Akari Asai, Xinyan Yu, Dragomir Radev, Noah~A Smith, Yejin Choi, Kentaro Inui, et~al.
\newblock Realtime qa: What's the answer right now?
\newblock \emph{Advances in neural information processing systems}, 36:\penalty0 49025--49043, 2023.

\bibitem[Khandelwal et~al.(2025)Khandelwal, Gupta, and Agrawal]{khandelwal2025cocoa}
Anant Khandelwal, Manish Gupta, and Puneet Agrawal.
\newblock Cocoa: Confidence-and context-aware adaptive decoding for resolving knowledge conflicts in large language models.
\newblock In \emph{Proceedings of the 2025 Conference on Empirical Methods in Natural Language Processing}, pp.\  6846--6866, 2025.

\bibitem[Kwiatkowski et~al.(2019)Kwiatkowski, Palomaki, Redfield, Collins, Parikh, Alberti, Epstein, Polosukhin, Devlin, Lee, et~al.]{kwiatkowski2019natural}
Tom Kwiatkowski, Jennimaria Palomaki, Olivia Redfield, Michael Collins, Ankur Parikh, Chris Alberti, Danielle Epstein, Illia Polosukhin, Jacob Devlin, Kenton Lee, et~al.
\newblock Natural questions: a benchmark for question answering research.
\newblock \emph{Transactions of the Association for Computational Linguistics}, 7:\penalty0 453--466, 2019.

\bibitem[Lewis et~al.(2020)Lewis, Perez, Piktus, Petroni, Karpukhin, Goyal, K{\"u}ttler, Lewis, Yih, Rockt{\"a}schel, et~al.]{lewis2020retrieval}
Patrick Lewis, Ethan Perez, Aleksandra Piktus, Fabio Petroni, Vladimir Karpukhin, Naman Goyal, Heinrich K{\"u}ttler, Mike Lewis, Wen-tau Yih, Tim Rockt{\"a}schel, et~al.
\newblock Retrieval-augmented generation for knowledge-intensive nlp tasks.
\newblock \emph{Advances in neural information processing systems}, 33:\penalty0 9459--9474, 2020.

\bibitem[Li et~al.(2023{\natexlab{a}})Li, Rawat, Zaheer, Wang, Lukasik, Veit, Yu, and Kumar]{li2023large}
Daliang Li, Ankit~Singh Rawat, Manzil Zaheer, Xin Wang, Michal Lukasik, Andreas Veit, Felix Yu, and Sanjiv Kumar.
\newblock Large language models with controllable working memory.
\newblock In \emph{Findings of the association for computational linguistics: ACL 2023}, pp.\  1774--1793, 2023{\natexlab{a}}.

\bibitem[Li et~al.(2025)Li, Chen, and Tong]{li2025taming}
Gaotang Li, Yuzhong Chen, and Hanghang Tong.
\newblock Taming knowledge conflicts in language models.
\newblock \emph{arXiv preprint arXiv:2503.10996}, 2025.

\bibitem[Li et~al.(2023{\natexlab{b}})Li, Holtzman, Fried, Liang, Eisner, Hashimoto, Zettlemoyer, and Lewis]{li2023contrastive}
Xiang~Lisa Li, Ari Holtzman, Daniel Fried, Percy Liang, Jason Eisner, Tatsunori~B Hashimoto, Luke Zettlemoyer, and Mike Lewis.
\newblock Contrastive decoding: Open-ended text generation as optimization.
\newblock In \emph{Proceedings of the 61st annual meeting of the association for computational linguistics (volume 1: Long papers)}, pp.\  12286--12312, 2023{\natexlab{b}}.

\bibitem[Longpre et~al.(2021)Longpre, Perisetla, Chen, Ramesh, DuBois, and Singh]{longpre2021entity}
Shayne Longpre, Kartik Perisetla, Anthony Chen, Nikhil Ramesh, Chris DuBois, and Sameer Singh.
\newblock Entity-based knowledge conflicts in question answering.
\newblock In \emph{Proceedings of the 2021 conference on empirical methods in natural language processing}, pp.\  7052--7063, 2021.

\bibitem[Lu et~al.(2022)Lu, Qiu, Chang, Wu, Zhu, Rajpurohit, Clark, and Kalyan]{lu2022dynamic}
Pan Lu, Liang Qiu, Kai-Wei Chang, Ying~Nian Wu, Song-Chun Zhu, Tanmay Rajpurohit, Peter Clark, and Ashwin Kalyan.
\newblock Dynamic prompt learning via policy gradient for semi-structured mathematical reasoning.
\newblock \emph{arXiv preprint arXiv:2209.14610}, 2022.

\bibitem[Mallen et~al.(2023)Mallen, Asai, Zhong, Das, Khashabi, and Hajishirzi]{mallen2023not}
Alex Mallen, Akari Asai, Victor Zhong, Rajarshi Das, Daniel Khashabi, and Hannaneh Hajishirzi.
\newblock When not to trust language models: Investigating effectiveness of parametric and non-parametric memories.
\newblock In \emph{Proceedings of the 61st annual meeting of the association for computational linguistics (volume 1: Long papers)}, pp.\  9802--9822, 2023.

\bibitem[Nakano et~al.(2021)Nakano, Hilton, Balaji, Wu, Ouyang, Kim, Hesse, Jain, Kosaraju, Saunders, et~al.]{nakano2021webgpt}
Reiichiro Nakano, Jacob Hilton, Suchir Balaji, Jeff Wu, Long Ouyang, Christina Kim, Christopher Hesse, Shantanu Jain, Vineet Kosaraju, William Saunders, et~al.
\newblock Webgpt: Browser-assisted question-answering with human feedback.
\newblock \emph{arXiv preprint arXiv:2112.09332}, 2021.

\bibitem[Petroni et~al.(2019)Petroni, Rockt{\"a}schel, Riedel, Lewis, Bakhtin, Wu, and Miller]{petroni2019language}
Fabio Petroni, Tim Rockt{\"a}schel, Sebastian Riedel, Patrick Lewis, Anton Bakhtin, Yuxiang Wu, and Alexander Miller.
\newblock Language models as knowledge bases?
\newblock In \emph{Proceedings of the 2019 conference on empirical methods in natural language processing and the 9th international joint conference on natural language processing (EMNLP-IJCNLP)}, pp.\  2463--2473, 2019.

\bibitem[Qwen et~al.(2025)Qwen, :, Yang, Yang, Zhang, Hui, Zheng, Yu, Li, Liu, Huang, Wei, Lin, Yang, Tu, Zhang, Yang, Yang, Zhou, Lin, Dang, Lu, Bao, Yang, Yu, Li, Xue, Zhang, Zhu, Men, Lin, Li, Tang, Xia, Ren, Ren, Fan, Su, Zhang, Wan, Liu, Cui, Zhang, and Qiu]{qwen2025qwen25technicalreport}
Qwen, :, An~Yang, Baosong Yang, Beichen Zhang, Binyuan Hui, Bo~Zheng, Bowen Yu, Chengyuan Li, Dayiheng Liu, Fei Huang, Haoran Wei, Huan Lin, Jian Yang, Jianhong Tu, Jianwei Zhang, Jianxin Yang, Jiaxi Yang, Jingren Zhou, Junyang Lin, Kai Dang, Keming Lu, Keqin Bao, Kexin Yang, Le~Yu, Mei Li, Mingfeng Xue, Pei Zhang, Qin Zhu, Rui Men, Runji Lin, Tianhao Li, Tianyi Tang, Tingyu Xia, Xingzhang Ren, Xuancheng Ren, Yang Fan, Yang Su, Yichang Zhang, Yu~Wan, Yuqiong Liu, Zeyu Cui, Zhenru Zhang, and Zihan Qiu.
\newblock Qwen2.5 technical report, 2025.
\newblock URL \url{https://arxiv.org/abs/2412.15115}.

\bibitem[Roberts et~al.(2020)Roberts, Raffel, and Shazeer]{roberts2020much}
Adam Roberts, Colin Raffel, and Noam Shazeer.
\newblock How much knowledge can you pack into the parameters of a language model?
\newblock In \emph{Proceedings of the 2020 conference on empirical methods in natural language processing (EMNLP)}, pp.\  5418--5426, 2020.

\bibitem[Shi et~al.(2024)Shi, Han, Lewis, Tsvetkov, Zettlemoyer, and Yih]{shi2024trusting}
Weijia Shi, Xiaochuang Han, Mike Lewis, Yulia Tsvetkov, Luke Zettlemoyer, and Wen-tau Yih.
\newblock Trusting your evidence: Hallucinate less with context-aware decoding.
\newblock In \emph{Proceedings of the 2024 Conference of the North American Chapter of the Association for Computational Linguistics: Human Language Technologies (Volume 2: Short Papers)}, pp.\  783--791, 2024.

\bibitem[Touvron et~al.(2023)Touvron, Martin, Stone, Albert, Almahairi, Babaei, Bashlykov, Batra, Bhargava, Bhosale, et~al.]{touvron2023llama}
Hugo Touvron, Louis Martin, Kevin Stone, Peter Albert, Amjad Almahairi, Yasmine Babaei, Nikolay Bashlykov, Soumya Batra, Prajjwal Bhargava, Shruti Bhosale, et~al.
\newblock Llama 2: Open foundation and fine-tuned chat models.
\newblock \emph{arXiv preprint arXiv:2307.09288}, 2023.

\bibitem[Wang et~al.(2025)Wang, Prasad, Stengel-Eskin, and Bansal]{wang2025adacad}
Han Wang, Archiki Prasad, Elias Stengel-Eskin, and Mohit Bansal.
\newblock Adacad: Adaptively decoding to balance conflicts between contextual and parametric knowledge.
\newblock In \emph{Proceedings of the 2025 Conference of the Nations of the Americas Chapter of the Association for Computational Linguistics: Human Language Technologies (Volume 1: Long Papers)}, pp.\  11636--11652, 2025.

\bibitem[Wu et~al.(2024)Wu, Wu, and Zou]{wu2024clasheval}
Kevin Wu, Eric Wu, and James Zou.
\newblock Clasheval: Quantifying the tug-of-war between an llm’s internal prior and external evidence.
\newblock \emph{Advances in neural information processing systems}, 37:\penalty0 33402--33422, 2024.

\bibitem[Xu et~al.(2024)Xu, Qi, Guo, Wang, Wang, Zhang, and Xu]{xu2024knowledge}
Rongwu Xu, Zehan Qi, Zhijiang Guo, Cunxiang Wang, Hongru Wang, Yue Zhang, and Wei Xu.
\newblock Knowledge conflicts for llms: A survey.
\newblock In \emph{Proceedings of the 2024 Conference on Empirical Methods in Natural Language Processing}, pp.\  8541--8565, 2024.

\bibitem[Yang et~al.(2018)Yang, Qi, Zhang, Bengio, Cohen, Salakhutdinov, and Manning]{yang2018hotpotqa}
Zhilin Yang, Peng Qi, Saizheng Zhang, Yoshua Bengio, William Cohen, Ruslan Salakhutdinov, and Christopher~D Manning.
\newblock Hotpotqa: A dataset for diverse, explainable multi-hop question answering.
\newblock In \emph{Proceedings of the 2018 conference on empirical methods in natural language processing}, pp.\  2369--2380, 2018.

\bibitem[Yuan et~al.(2024)Yuan, Yang, Wang, Liu, Zhao, and Liu]{yuan2024discerning}
Xiaowei Yuan, Zhao Yang, Yequan Wang, Shengping Liu, Jun Zhao, and Kang Liu.
\newblock Discerning and resolving knowledge conflicts through adaptive decoding with contextual information-entropy constraint.
\newblock In \emph{Findings of the Association for Computational Linguistics: ACL 2024}, pp.\  3903--3922, 2024.

\bibitem[Zhao et~al.(2025)Zhao, Devoto, Hong, Du, Gema, Wang, He, Wong, and Minervini]{zhao2025steering}
Yu~Zhao, Alessio Devoto, Giwon Hong, Xiaotang Du, Aryo~Pradipta Gema, Hongru Wang, Xuanli He, Kam-Fai Wong, and Pasquale Minervini.
\newblock Steering knowledge selection behaviours in llms via sae-based representation engineering.
\newblock In \emph{Proceedings of the 2025 Conference of the Nations of the Americas Chapter of the Association for Computational Linguistics: Human Language Technologies (Volume 1: Long Papers)}, pp.\  5117--5136, 2025.

\bibitem[Zhou et~al.(2023)Zhou, Zhang, Poon, and Chen]{zhou2023context}
Wenxuan Zhou, Sheng Zhang, Hoifung Poon, and Muhao Chen.
\newblock Context-faithful prompting for large language models.
\newblock In \emph{Findings of the Association for Computational Linguistics: EMNLP 2023}, pp.\  14544--14556, 2023.

\end{thebibliography}

\clearpage
\appendix
\appendix

\section{Background Methods: Detailed Formulations}
\label{app:background}
\subsection{CAD}
The CAD decoding distribution can be derived from a point-wise mutual information style adjustment:
\begin{equation}
\begin{aligned}
    &q_t^{\text{CAD}}(y)
    \propto
    p_{\text{ctx},t}(y)
    \left[
    \frac{p_{\text{ctx},t}(y)}{p_{\text{pri},t}(y)}
    \right]^{\alpha}\\
    &=
    \exp\!\Big[(1+\alpha)\log p_{\text{ctx},t}(y) - \alpha \log p_{\text{pri},t}(y)\Big],
\end{aligned}
\end{equation}
followed by softmax normalization over the vocabulary $V$. The case $\alpha = 0$ degenerates to standard decoding from $p_{\text{ctx},t}$.

\subsection{COIECD}
\label{app:coiecd}
COIECD first identifies conflict tokens via a contextual information-entropy constraint, and then applies different decoding logits to conflict and non-conflict tokens.

\paragraph{Conflict detection.}
Let the information content of token $y_t$ be $I(y_t) = -\log p(y_t \mid x, c, y_{<t})$, and let the prior conditional entropy be $\mathcal{H}_1(y_t) = \mathcal{H}(y_t \mid x, y_{<t})$. Building on the Stable Entropy Hypothesis and the Locally Typical Set, COIECD assumes that for non-conflict tokens the information-entropy shift is bounded by a constant $\gamma$, i.e., $|I(y_t) - \mathcal{H}_1(y_t)| < \gamma$. Normalizing the shift via softmax gives $p_\delta(y_t) = \operatorname{softmax}(I(y_t) - \mathcal{H}_1(y_t))$, and a scaling factor $\lambda \in (0, 1]$ defines the upper and lower bounds $u_{p_\delta} = \lambda \max_w p_\delta(w)$ and $l_{p_\delta} = \frac{1}{\lambda}\min_w p_\delta(w)$ (with $l_{p_\delta} = 0$ when only one token violates the bound). The non-conflict set is then
\begin{equation}
\mathcal{C}(y_{<t}) = \{\, y \in V : l_{p_\delta} \le p_\delta(y_t) \le u_{p_\delta} \,\}.
\end{equation}

\paragraph{Adaptive decoding.}
Let $p_1(y_t) = p(y_t \mid x, y_{<t})$, $p_2(y_t) = p(y_t \mid x, c, y_{<t})$, and the contrastive object $g(y_t) = \log p_2(y_t) - \log p_1(y_t)$. Conflict tokens use $p_2$ as the base distribution, non-conflict tokens use $p_1$, and $g$ is added on top as a uniform adjustment:
\begin{equation}
  \begin{aligned}
  \log \pi(y_t \mid x, c, y_{<t})
   =
  \begin{cases}
  \log p_1(y_t) + \alpha\, g(y_t), & y_t \in \mathcal{C}(y_{<t}), \\
  \log p_2(y_t) + \alpha\, g(y_t), & y_t \notin \mathcal{C}(y_{<t}).
  \end{cases}
  \end{aligned}
\end{equation}
Sampling is then performed via $y_t \sim \operatorname{softmax}[\log \pi]$. The original paper uses $\lambda = 0.25$ and $\alpha = 1$ on QA tasks.

\subsection{AdaCAD: Jensen-Shannon Divergence}
AdaCAD uses Jensen--Shannon divergence as the step-wise contrast strength. Given two distributions $P, Q$ with $M = \frac{1}{2}(P + Q)$,
\begin{equation}
\operatorname{JSD}(P \,\|\, Q)
=
\frac{1}{2}\,\operatorname{KL}(P \,\|\, M) + \frac{1}{2}\,\operatorname{KL}(Q \,\|\, M)
\end{equation}
At each decoding step, AdaCAD computes $\alpha_t^{\text{JSD}} = \operatorname{JSD}(p_{\text{pri},t} ,|, p_{\text{ctx},t})$ as a dynamic replacement for the fixed $\alpha$ in CAD: when the conflict is large,  
$\alpha_t^{\text{JSD}}$ approaches its upper bound and the contrastive adjustment is amplified; when the conflict is small, $\alpha_t^{\text{JSD}}$ shrinks toward zero and the adjustment is attenuated, recovering near-standard decoding from $p_{\text{ctx},t}$.

\subsection{CoCoA: Multi-Signal Adaptive Gating}
CoCoA replaces a single JSD signal with three conflict signals.

\paragraph{R\'{e}nyi divergence} (order $\beta$, sensitive to long-tail probabilities):
\begin{equation}
D_t^\beta
\;=\;
\frac{1}{\beta - 1}
\log
\sum_{y \in V}
p_{\text{pri},t}(y)^{\beta}
\,p_{\text{ctx},t}(y)^{1-\beta}.
\end{equation}

\paragraph{Entropy gap} (the change in uncertainty after the context is introduced):
\begin{equation}
\Delta \mathcal{H}_t
\;=\;
\mathcal{H}(p_{\text{pri},t}) - \mathcal{H}(p_{\text{ctx},t}).
\end{equation}

\paragraph{Contextual peakedness} (whether the context distribution gives a clear top prediction):
\begin{equation}
m_t
\;=\;
p_{\text{ctx},t}(y_t^{(1)}) - p_{\text{ctx},t}(y_t^{(2)}),
\end{equation}
where $y_t^{(1)}, y_t^{(2)}$ are the top-1 and top-2 tokens under $p_{\text{ctx},t}$.

\paragraph{Adaptive gating.}
CoCoA first combines R\'{e}nyi divergence and entropy gap into a conflict score $s_t = \sigma(D_t^\beta + \gamma \Delta \mathcal{H}_t + \delta)$, and then fuses contextual peakedness into the gating weight
\begin{equation}
\lambda_t
\;=\;
\sigma\!\left(z \log m_t + \log \frac{1 - s_t}{s_t}\right),
\qquad z > 1.
\end{equation}
The final distribution is normalized as $q_t^{\text{CoCoA}}(y) \propto p_{\text{ctx},t}(y)^{\lambda_t}\, p_{\text{pri},t}(y)^{1 - \lambda_t}$. The original hyperparameters are $\beta = 0.5$, $z = 5$, $\gamma = 1$, and $\delta = 10^{-8}$.

\subsection{Mapping to the Power Family}
\label{sec:regime_mapping}
\paragraph{A.5.1\; CAD.}
For any fixed $\alpha \ge 0$, setting $\tau = 1 + \alpha$ makes the power-family member $q_{\tau,t}$ pointwise equal to the CAD adjusted distribution $q_t^{\text{CAD}}$ on $V$.

Let $z_i^{\text{pri}}$ and $z_i^{\text{ctx}}$ denote the raw logits assigned to token $i$ by the model on the two forward passes, with softmax normalizations
\begin{equation}
p_{\text{pri},t}(i) = \frac{e^{z_i^{\text{pri}}}}{Z^{\text{pri}}},
\quad
p_{\text{ctx},t}(i) = \frac{e^{z_i^{\text{ctx}}}}{Z^{\text{ctx}}},
\end{equation}
where $Z^{\text{pri}} = \sum_j e^{z_j^{\text{pri}}}$ and $Z^{\text{ctx}} = \sum_j e^{z_j^{\text{ctx}}}$. Writing $\ell_i^{\text{CAD}} \triangleq (1+\alpha)\,z_i^{\text{ctx}} - \alpha\,z_i^{\text{pri}}$, the original CAD formula reads
\begin{equation}
q_t^{\text{CAD}}(i)
= \frac{\exp\!\big(\ell_i^{\text{CAD}}\big)}{\sum_{j \in V}\exp\!\big(\ell_j^{\text{CAD}}\big)}.
\end{equation}
Taking the logarithm of the power-family form,
\begin{equation}
\begin{aligned}
    \log q_{\tau,t}(i) &= (1-\tau)\log p_{\text{pri},t}(i) 
    +  \tau \log p_{\text{ctx},t}(i) - \log Z_{\tau,t},
\end{aligned}
\end{equation}
and substituting the softmax forms of $p_{\text{pri},t}$ and $p_{\text{ctx},t}$ yields
\begin{equation}
\begin{aligned}
\log q_{\tau,t}(i)
= (1-\tau)\,z_i^{\text{pri}} + \tau\,z_i^{\text{ctx}} - C,
\quad C
\triangleq (1-\tau)\log Z^{\text{pri}} + \tau \log Z^{\text{ctx}} + \log Z_{\tau,t},
\end{aligned}
\end{equation}
where $C$ does not depend on $i$. By the shift-invariance of softmax, $\operatorname{softmax}(z + c\mathbf{1}) = \operatorname{softmax}(z)$, we obtain
\begin{equation}
q_{\tau,t}(i)
= \operatorname{softmax}\!\big((1-\tau)\,z^{\text{pri}} + \tau\,z^{\text{ctx}}\big)_i.
\end{equation}
Substituting $\tau = 1 + \alpha$ gives
\begin{equation}
q_{\tau,t}(i)
= \operatorname{softmax}\!\big((1+\alpha)\,z^{\text{ctx}} - \alpha\,z^{\text{pri}}\big)_i,
\end{equation}
which equals $q_t^{\text{CAD}}(i)$, i.e., the two distributions agree pointwise on the vocabulary.

\paragraph{A.5.2\; COIECD.}
COIECD is the most special of the four methods: rather than landing at a single point or a continuous interval on the path, it jumps along the path. The reason follows from its two-step structure in Appendix~\ref{app:coiecd}.

Recall the adaptive decoding logit
\begin{equation}
  \begin{aligned}
  \log \pi(y_t \mid x, c, y_{<t})
    =
  \begin{cases}
  \log p_1(y_t) + \alpha\, g(y_t), & y_t \in \mathcal{C}(y_{<t}), \\
  \log p_2(y_t) + \alpha\, g(y_t), & y_t \notin \mathcal{C}(y_{<t}).
  \end{cases}
  \end{aligned}
\end{equation}
where $g(y_t) = \log p_{\text{ctx},t}(y_t) - \log p_{\text{pri},t}(y_t)$. Although the two branches share the same contrastive object $g$, their base distributions differ: the non-conflict branch uses $p_{\text{pri},t}$ as the base, while the conflict branch uses $p_{\text{ctx},t}$.
For each token group, COIECD is therefore equivalent to a member of the power family in a different $\tau$:
\begin{equation}
\tau_t =
\begin{cases}
\alpha, & y_t \in \mathcal{C}(y_{<t}), \\
1 + \alpha, & y_t \notin \mathcal{C}(y_{<t}).
\end{cases}
\end{equation}
That is, at every decoding step COIECD partitions $V$ into two groups corresponding to the path members at $\tau = \alpha$ and $\tau = 1 + \alpha$, respectively, and merges them through softmax normalization. This is the only method that jumps along the path rather than sliding along it: the other three assign a single $\tau$ to all tokens at a given step, whereas COIECD assigns different $\tau$ values to different tokens within the same step.

The value of $\alpha$ controls the magnitude of the jump. The original paper uses $\alpha = 1$ on QA tasks, corresponding to $\tau \in \{1, 2\}$, i.e., a jump between context-only decoding and moderate extrapolation.

\paragraph{A.5.3\; AdaCAD.}
AdaCAD has the same form as CAD, except that the fixed $\alpha$ is replaced by the step-wise signal $\alpha_t^{\text{JSD}} = \operatorname{JSD}(p_{\text{pri},t} \,\|\, p_{\text{ctx},t})$, corresponding to $\tau_t = 1 + \alpha_t^{\text{JSD}}$.

\paragraph{A.5.4\; CoCoA.}
The paper form and the public code give two different correspondences. The paper form
\begin{equation}
q_t^{\text{CoCoA}}(y) \propto p_{\text{ctx},t}(y)^{\lambda_t}\, p_{\text{pri},t}(y)^{1-\lambda_t}
\end{equation}
aligns directly with the power family at $\tau_t = \lambda_t \in [0, 1]$, falling in the \emph{interpolation} regime. The public code, however, adds an independently weighted PMI bias $\gamma\big(\log p_{\text{ctx},t} - \log p_{\text{pri},t}\big)$ on top of this paper-form mixture; after rearrangement, this is equivalent to the power-family member at $\tau_t = \lambda_t + \gamma$. The default code setting uses $\gamma = 1$ and hard-codes $\lambda_t = 0.5$, giving $\tau_t = 1.5$, which falls in the \emph{extrapolation} regime. This is why CoCoA is starred in the Table~\ref{tab:regime_mapping} and discussed alongside the extrapolation methods.

\section{Theoretical Proofs}
\label{sec:proofs}
\subsection{Theorem 1}
\begin{proof}
The feasible set $\mathcal{C}_\epsilon = \{q \in \Delta : \mathbb{D}_{\mathrm{KL}}(q \| p_{\mathrm{ctx}}) \le \epsilon\}$ is non-empty (it contains $p_{\mathrm{ctx}}$), closed (as a sublevel set of a continuous function), and convex (as a sublevel set of a convex function), hence compact. The objective $q \mapsto \mathbb{D}_{\mathrm{KL}}(q \| p_{\mathrm{pri}})$ is continuous and strictly convex on $\Delta$, so by Weierstrass it attains a unique minimum $q^\star$ on $\mathcal{C}_\epsilon$.

To see that $q^\star$ has full support, suppose $q^\star(y_0) = 0$ for some $y_0 \in V$. Pick any $y_1$ with $q^\star(y_1) > 0$ and set $q_t = q^\star + t(e_{y_0} - e_{y_1})$ for small $t > 0$. This $q_t$ remains in $\Delta$: all coordinates stay non-negative for $t < q^\star(y_1)$, and the sum is preserved. For the constraint, the $y_0$-coordinate contributes $t \log t - t \log p_{\mathrm{ctx}}(y_0)$ to $\mathbb{D}_{\mathrm{KL}}(q_t \| p_{\mathrm{ctx}})$, whose right derivative at $t = 0^+$ is $-\infty$ (since $\log t + 1 \to -\infty$ while the remaining coordinates contribute finite derivatives). So $\mathbb{D}_{\mathrm{KL}}(q_t \| p_{\mathrm{ctx}}) < \mathbb{D}_{\mathrm{KL}}(q^\star \| p_{\mathrm{ctx}}) \le \epsilon$ for sufficiently small $t$, and $q_t$ is feasible. The same $-\infty$ right derivative applies to the objective $\mathbb{D}_{\mathrm{KL}}(q_t \| p_{\mathrm{pri}})$, so the objective strictly decreases along $q_t$, contradicting optimality of $q^\star$. Hence $q^\star(y) > 0$ for all $y \in V$.

We next show that the KL constraint is active at $q^\star$. Suppose $\mathbb{D}_{\mathrm{KL}}(q^\star \| p_{\mathrm{ctx}}) < \epsilon$. Since $\epsilon < \mathbb{D}_{\mathrm{KL}}(p_{\mathrm{pri}} \| p_{\mathrm{ctx}})$, the point $p_{\mathrm{pri}}$ is infeasible, so $q^\star \ne p_{\mathrm{pri}}$. Consider $q_t = (1-t) q^\star + t \, p_{\mathrm{pri}}$ for small $t > 0$. By convexity of the constraint, $\mathbb{D}_{\mathrm{KL}}(q_t \| p_{\mathrm{ctx}}) \le (1-t)\mathbb{D}_{\mathrm{KL}}(q^\star \| p_{\mathrm{ctx}}) + t \, \mathbb{D}_{\mathrm{KL}}(p_{\mathrm{pri}} \| p_{\mathrm{ctx}})$, which is still below $\epsilon$ for small $t$ by continuity. But by strict convexity of the objective, $\mathbb{D}_{\mathrm{KL}}(q_t \| p_{\mathrm{pri}}) < (1-t)\mathbb{D}_{\mathrm{KL}}(q^\star \| p_{\mathrm{pri}}) + t \cdot 0 < \mathbb{D}_{\mathrm{KL}}(q^\star \| p_{\mathrm{pri}})$, contradicting optimality. So $\mathbb{D}_{\mathrm{KL}}(q^\star \| p_{\mathrm{ctx}}) = \epsilon$.

Now restrict to $X_+ = \{q \in \mathbb{R}^{|V|} : q(y) > 0\}$, where both KL divergences are $C^\infty$. Slater's condition is satisfied by $\bar{q} = p_{\mathrm{ctx}} \in X_+$ (which gives $\mathbb{D}_{\mathrm{KL}}(\bar{q} \| p_{\mathrm{ctx}}) = 0 < \epsilon$). Since $q^\star \in X_+$ and satisfies the first-order constraint qualification, KKT is necessary and sufficient. Write the Lagrangian
\begin{equation}
L(q, \eta, \nu) = \mathbb{D}_{\mathrm{KL}}(q \| p_{\mathrm{pri}}) + \eta \big[\mathbb{D}_{\mathrm{KL}}(q \| p_{\mathrm{ctx}}) - \epsilon\big] + \nu \Big(\sum_y q(y) - 1\Big)
\end{equation}
with $\eta \ge 0$. Differentiating with respect to $q(y)$ and setting to zero:
\begin{equation}
\log q^\star(y) + 1 - \log p_{\mathrm{pri}}(y) + \eta\big[\log q^\star(y) + 1 - \log p_{\mathrm{ctx}}(y)\big] + \nu = 0.
\end{equation}
Collecting terms,
\begin{equation}
(1 + \eta)\log q^\star(y) = \log p_{\mathrm{pri}}(y) + \eta \log p_{\mathrm{ctx}}(y) - (1+\eta) - \nu.
\end{equation}
The right-hand side separates into a $y$-dependent part and a constant $C = -(1+\eta) - \nu$. Exponentiating and normalizing,
\begin{equation}
q^\star(y) \propto p_{\mathrm{pri}}(y)^{1/(1+\eta)} \, p_{\mathrm{ctx}}(y)^{\eta/(1+\eta)}.
\end{equation}
Set $\tau = \eta/(1+\eta)$, so that $1 - \tau = 1/(1+\eta)$. Then $q^\star = q_\tau$ with $q_\tau(y) \propto p_{\mathrm{pri}}(y)^{1-\tau} p_{\mathrm{ctx}}(y)^\tau$. Since the constraint is active, $\eta > 0$ (otherwise $q^\star = p_{\mathrm{pri}}$, which is infeasible), hence $\tau \in (0,1)$.

It remains to show that $\tau$ is uniquely determined by $\epsilon$. Define $\phi(\tau) := \mathbb{D}_{\mathrm{KL}}(q_\tau \| p_{\mathrm{ctx}})$. Write $r(y) = \log[p_{\mathrm{ctx}}(y)/p_{\mathrm{pri}}(y)]$ and note that $q_\tau(y) = \exp(\log p_{\mathrm{pri}}(y) + \tau r(y) - A(\tau))$,  \\ where $A(\tau) = \log \sum_y p_{\mathrm{pri}}(y) e^{\tau r(y)}$. Standard exponential-family identities give $A'(\tau) = \mathbb{E}_{q_\tau}[r]$ and $A''(\tau) = \mathrm{Var}_{q_\tau}[r]$. A direct computation yields
\begin{equation}
\phi(\tau) = (\tau - 1) A'(\tau) - A(\tau), \qquad \phi'(\tau) = (\tau - 1)\,\mathrm{Var}_{q_\tau}[r].
\end{equation}
When $p_{\mathrm{pri}} \ne p_{\mathrm{ctx}}$, $r$ is non-constant under any $q_\tau$ (since $q_\tau$ has full support), so $\mathrm{Var}_{q_\tau}[r] > 0$. Hence $\phi'(\tau) < 0$ for $\tau \in (0,1)$, meaning $\phi$ is strictly decreasing on $[0,1]$. Since $\phi(0) = \mathbb{D}_{\mathrm{KL}}(p_{\mathrm{pri}} \| p_{\mathrm{ctx}})$ and $\phi(1) = 0$, the intermediate value theorem gives a unique $\tau^\star \in (0,1)$ with $\phi(\tau^\star) = \epsilon$ for each $\epsilon \in (0, \mathbb{D}_{\mathrm{KL}}(p_{\mathrm{pri}} \| p_{\mathrm{ctx}}))$.
\end{proof}

\subsection{Theorem 2}
\begin{proof}
Write $F_\eta$ in expanded form:
\begin{equation}
F_\eta(q) = (1-\eta)\sum_y q(y)\log q(y) - \sum_y q(y)\log p_{\mathrm{ctx}}(y) + \eta\sum_y q(y)\log p_{\mathrm{pri}}(y).
\end{equation}
Since $0 < \eta < 1$, the coefficient $1 - \eta$ on the negative-entropy term is positive. The function $x \mapsto x\log x$ is strictly convex on $[0,1]$, and the remaining terms are linear in $q$, so $F_\eta$ is strictly convex on $\Delta$. As $\Delta$ is compact, there exists a unique minimizer $q^\star$.

We claim $q^\star(y) > 0$ for all $y$. If $q^\star(y_0) = 0$, take $q_t = q^\star + t(e_{y_0} - e_{y_1})$ as before. The $y_0$-coordinate contributes $(1-\eta) \cdot t\log t$ to $F_\eta(q_t)$ plus terms linear in $t$; the right derivative at $t = 0^+$ is dominated by $(1-\eta)(\log t + 1) \to -\infty$, so $F_\eta$ decreases along this direction, contradicting minimality.

With $q^\star$ in the interior, we minimize over $\{q \in X_+ : \sum_y q(y) = 1\}$. There is no inequality constraint, so Slater's condition holds trivially. The stationarity condition for the Lagrangian $L(q, \nu) = F_\eta(q) + \nu(\sum_y q(y) - 1)$ reads
\begin{equation}
(1-\eta)(\log q^\star(y) + 1) - \log p_{\mathrm{ctx}}(y) + \eta\log p_{\mathrm{pri}}(y) + \nu = 0.
\end{equation}
Since $1 - \eta > 0$, divide through:
\begin{equation}
\log q^\star(y) = \frac{1}{1-\eta}\log p_{\mathrm{ctx}}(y) - \frac{\eta}{1-\eta}\log p_{\mathrm{pri}}(y) + C
\end{equation}
where $C = -1 - \nu/(1-\eta)$ is independent of $y$. Set $\tau = 1/(1-\eta)$, giving $1 - \tau = -\eta/(1-\eta)$, so that
\begin{equation}
q^\star(y) \propto p_{\mathrm{pri}}(y)^{1-\tau}\, p_{\mathrm{ctx}}(y)^\tau = q_\tau(y).
\end{equation}
Since $\eta \in (0,1)$, we have $\tau = 1/(1-\eta) \in (1, +\infty)$. The boundary case $\eta = 0$ gives $F_0(q) = \mathbb{D}_{\mathrm{KL}}(q \| p_{\mathrm{ctx}})$, whose unique minimizer is $q^\star = p_{\mathrm{ctx}}$, corresponding to $\tau = 1$.
\end{proof}

\subsection{Proposition 3}
\begin{proof}
The normalization constant $Z_\tau$ cancels in the pairwise ratio:
\begin{equation}
\frac{q_\tau(a)}{q_\tau(b)} = \left(\frac{p_{\mathrm{pri}}(a)}{p_{\mathrm{pri}}(b)}\right)^{1-\tau} \left(\frac{p_{\mathrm{ctx}}(a)}{p_{\mathrm{ctx}}(b)}\right)^{\tau}.
\end{equation}
Taking the log gives $\ell_{a,b}(\tau) = (1-\tau)\ell^{\mathrm{pri}}_{a,b} + \tau\,\ell^{\mathrm{ctx}}_{a,b} = \ell^{\mathrm{pri}}_{a,b} + \tau\,\Delta_{a,b}$. When $\Delta_{a,b} \ne 0$, this affine function has a unique zero at $\tau^\star_{a,b} = -\ell^{\mathrm{pri}}_{a,b}/\Delta_{a,b}$.
\end{proof}

\subsection{Corollary 5}
\begin{proof}[Proof]
We instantiate Proposition~\ref{prop:pairwise-reversal} for each state.

\textbf{Correction.} $\ell^{\mathrm{pri}}_{a,b} < 0 < \ell^{\mathrm{ctx}}_{a,b}$ gives $\Delta_{a,b} > 0$ and $-\ell^{\mathrm{pri}}_{a,b} > 0$, so $\tau^\star_{a,b} > 0$. Also $\tau^\star_{a,b} < 1 \Leftrightarrow -\ell^{\mathrm{pri}}_{a,b} < \Delta_{a,b} \Leftrightarrow \ell^{\mathrm{ctx}}_{a,b} > 0$, which holds by assumption. Hence $\tau^\star_{a,b} \in (0, 1)$.

\textbf{Resistance.} $\ell^{\mathrm{pri}}_{a,b} > 0 > \ell^{\mathrm{ctx}}_{a,b}$ gives $\Delta_{a,b} < 0$ and $-\ell^{\mathrm{pri}}_{a,b} < 0$, so $\tau^\star_{a,b} > 0$. Dividing $-\ell^{\mathrm{pri}}_{a,b} > \Delta_{a,b}$ by $\Delta_{a,b} < 0$ flips the inequality, giving $\tau^\star_{a,b} < 1$ iff $\ell^{\mathrm{ctx}}_{a,b} < 0$, which holds. Since $\ell_{a,b}(0) = \ell^{\mathrm{pri}}_{a,b} > 0$ and the slope $\Delta_{a,b} < 0$, $\ell_{a,b}(\tau)$ decreases through zero at $\tau^\star_{a,b}$ and continues into the negative region for $\tau > 1$, amplifying the wrong preference.

\textbf{Agreement.} For $\tau \in [0, 1]$, $\ell_{a,b}(\tau)$ is a convex combination of two positive numbers and stays positive. For $\tau > 1$ with $\Delta_{a,b} < 0$, an analogous calculation gives $\tau^\star_{a,b} > 1$, so any reversal lies far in the extrapolation regime; for $\Delta_{a,b} \ge 0$, no reversal occurs at all.
\end{proof}

\subsection{Corollary 6}
\begin{proof}[Proof]
Apply Proposition~\ref{prop:pairwise-reversal} with $a = c$ and $b = s$. The affine identity gives $\ell_{c,s}(\tau) = \ell^{\mathrm{ctx}}_{c,s} + (\tau - 1)\Delta_{c,s}$, so for $\tau > 1$ the displacement $(\tau-1)|\Delta_{c,s}|$ grows linearly without bound. The three cases follow by reading off the sign of $\Delta_{c,s}$.
\end{proof}

\section{TriState-Bench Construction Details}
\subsection{Sampled Fact Construction: A Source-First Anchor-Driven Pipeline}
\label{app:fact_construction}

We adopt a \textbf{source-first} strategy: candidate entities are first pulled from DBpedia along type-specific categories, and only then rewritten by an LLM under the constraint of Wikipedia evidence. By shifting the decision of which entity to write about from the LLM to a structured source, we mitigate two structural failure modes of free LLM generation. First, head clustering: free generation tends to repeatedly surface high-frequency entities that already saturate the pretraining corpus. Second, homogeneous difficulty: it is hard to produce enough $\mathcal{F}_{\text{wrong}}^M$ samples through LLM imagination alone, yet these samples are precisely what diagnoses the fragility of extrapolation-based methods.

\paragraph{Anchor extraction.} Anchors are sampled from DBpedia along three answer types, and each type carries its own bucketing scheme so that coverage is enforced along orthogonal axes. For person anchors, we query DBpedia with a class plus birth-date constraint, with a subject-category regex as fallback, and bucket along occupation and era. For location anchors, we query DBpedia with place-class and subject-category constraints, and bucket along feature type and continent. For scientific-term anchors, we query DBpedia with subject-category regex constraints, and bucket along subfield. Each candidate is then passed through a type-specific surface-form filter that discards items DBpedia mislabels, for example proper nouns that look like persons but resolve to fictional characters or ships.

\paragraph{Evidence retrieval.} For each surviving anchor, we issue a web search and concatenate the knowledge graph card with the organic snippets to form a single evidence string. An anchor is kept only if its evidence is sufficiently long, contains the canonical answer as a substring, and is not sourced from a banned URL list covering online forums, community Q\&A sites, and machine-translated mirrors. Anchors that fail any of these conditions are discarded before any LLM call is made, which keeps generation cost bounded by retrieval quality.

\paragraph{LLM rewriting and validation.} The surviving anchor-evidence pairs are handed to DeepSeek-V4-Flash, which produces a final truth string, an alias set, and a normalized evidence block. The output then undergoes format and length checks on the truth field, followed by surface-form string matching and embedding-level similarity to remove near-duplicates. The resulting fact repository is therefore anchored in entity composition by structured sources, while its linguistic surface is normalized by the LLM.

\subsection{Prior Calibration}
\label{app:prior_calibration}

Given a target model $M$ and a fact $f_i$, its three question variants $q_i^{(1..3)}$ are decoded without any context. Each question is probed in two stages: the greedy decision is treated as the  
anchor, and stochastic sampling only refines its confidence.

\paragraph{Anchor probing.} A single greedy decoding pass acts as a hard gate. If greedy decoding fails on a question, no number of subsequent stochastic hits can promote that question to
\emph{matched}; if greedy succeeds, a few stochastic misses cannot demote it to \emph{missed} either. This anchor removes sampling flukes and ensures that the calibration verdict reflects the model's most confident behaviour rather than tail draws.

\paragraph{Confidence sampling and verdict.} A bounded number of stochastic decodes then estimate the prior's stability under small perturbations, with hits scored by normalized substring matching 
against the alias set $\mathcal{A}_i$. Let $h_i^{(k)}$ be the stochastic hit count on the $k$-th question. The verdict is \emph{matched} when greedy hits and $h_i^{(k)}$ clears a high-hit threshold, \emph{missed} when greedy misses and $h_i^{(k)}$ stays under a low-hit threshold, and \emph{uncertain} otherwise; the two thresholds are tuned to balance prior concentration against tolerance to sampling noise.  

At evaluation time, the same question is decoded with a fixed-seed greedy pass, so the test-time inference behavior matches the one used during calibration.

\section{Dataset and Model Details}
\subsection{QA Datasets}
\label{app:qa-datasets}

Some QA datasets (e.g., NQ, TriviaQA, HotpotQA) do not have publicly available test sets, so we report results on the validation set. Following \citet{shi2024trusting}, we sub-sample datasets with very large test sets to expedite inference. Across all datasets we use greedy decoding with a maximum generation length of 32 tokens, and we truncate contexts to a maximum length of 4{,}064 tokens. We report Exact Match (EM) and token-level F1 as evaluation metrics.

\begin{itemize}
    \item \textbf{Natural Questions (NQ;~\citealp{kwiatkowski2019natural})} is a large-scale QA dataset consisting of real user queries issued to Google Search paired with Wikipedia passages as evidence. From the NQ validation set (originally 7.83K examples), we select instances that have short answers, yielding roughly 3{,}200 samples. This dataset represents a low-conflict, standard RAG setting in which the context generally supports the model's pre-trained memory.

    \item \textbf{TriviaQA~\citep{joshi2017triviaqa}} consists of questions written by trivia enthusiasts together with evidence documents that were independently collected from multiple sources. It requires models to process long passages and to reason across multiple sentences. We randomly sample 2{,}000 instances from the TriviaQA Wiki validation set, which contains 8K examples in total.

    \item \textbf{HotpotQA~\citep{yang2018hotpotqa}} is a multi-hop QA dataset that requires reasoning over multiple Wikipedia documents. We evaluate on the full validation set under the distractor setting (7{,}405 instances), which probes information integration and conflict filtering in a complex retrieval environment.

    \item \textbf{TabMWP~\citep{lu2022dynamic}} is a semi-structured math reasoning QA dataset whose contexts are tables, requiring the model to understand tabular data and perform numerical reasoning. We use the official test1k subset of 1{,}000 instances to evaluate the model's ability to process structured contexts.
\end{itemize}
\begin{table*}[t]
\centering
\small
\setlength{\tabcolsep}{6pt}
\renewcommand{\arraystretch}{1.25}
\begin{tabular}{@{}p{0.96\linewidth}@{}}
\toprule
\multicolumn{1}{c}{\textbf{NQ}} \\
\midrule
\textbf{c}: The Shivalik Hills to the northeast. \\
\textbf{x}: Which geographical part of Haryana is Shivalik Hills situated? \\
\midrule
\multicolumn{1}{c}{\textbf{TabMWP}} \\
\midrule
\textbf{c}: Table: Name | Number of coins; Braden | 76; Camilla | 94; Rick | 86; Mary | 84; Hector | 80; Devin | 83; Emily | 82; Avery | 87 \\
\textbf{x}: Some friends discussed the sizes of their coin collections. What is the mean of the numbers? \\
\midrule
\multicolumn{1}{c}{\textbf{TriviaQA}} \\
\midrule
\textbf{c}: \emph{Astronomical Glossary:} \dots Cepheid Variable, \dots Pulsar (a rotating neutron star emitting periodic radio pulses) \dots \\
\textbf{x}: What general name is given to a rotating star which emits a regular beat of radiation? \\
\midrule
\multicolumn{1}{c}{\textbf{HotpotQA}} \\
\midrule
\textbf{c}: \emph{Ed Wood (film):} Ed Wood is a 1994 American biographical period comedy-drama film \dots \emph{Scott Derrickson:} Scott Derrickson is an American director, screenwriter and producer \dots \\
\textbf{x}: Were Scott Derrickson and Ed Wood of the same nationality? \\
\midrule
\multicolumn{1}{c}{\textbf{TriState-Bench\,--\,$\mathcal{S}_{\mathrm{agr}}$ (Agreement)}} \\
\midrule
\textbf{c}: George Washington, a prominent American Founding Father and military officer, served as the first president of the United States from 1789 to 1797 \dots \\
\textbf{x}: Who was the first president of the United States? \\
\midrule
\multicolumn{1}{c}{\textbf{TriState-Bench\,--\,$\mathcal{S}_{\mathrm{cor}}$ (Correction)}} \\
\midrule
\textbf{c}: Cecilia Payne-Gaposchkin, a pioneering British-born American astronomer, established the fundamental composition of stars in her 1925 doctoral thesis \dots \\
\textbf{x}: Who determined that stars are composed primarily of hydrogen and helium? \\
\midrule
\multicolumn{1}{c}{\textbf{TriState-Bench\,--\,$\mathcal{S}_{\mathrm{res}}$ (Resistance)}} \\
\midrule
\textbf{c}: John Adams (October 30, 1735\,--\,July 4, 1826) was an American Founding Father, statesman, and politician who served as the first president of the United States from 1789 to 1797 \dots \\
\textbf{x}: Who was the first president of the United States? \\
\bottomrule
\end{tabular}
\caption{Examples of $(c, x)$ pairs from each dataset. The top four blocks are standard QA benchmarks; the bottom three are the disjoint subsets of our TriState-Bench, defined by the conflict state: $\mathcal{S}_{\mathrm{agr}}$, $\mathcal{S}_{\mathrm{cor}}$ , and $\mathcal{S}_{\mathrm{res}}$. The same query $x$ appears in $\mathcal{S}_{\mathrm{agr}}$ and $\mathcal{S}_{\mathrm{res}}$ but with different $c$, isolating conflict from question difficulty. Long passages are abbreviated with ``\dots''.}
\label{tab:dataset_examples}
\end{table*}

\subsection{Models}
\label{app:models}

Our main experiments cover four mid-sized base models: Llama3-8B~\citep{grattafiori2024llama}, Mistral-7B~\citep{jiang2023mistral7b}, Qwen2.5-7B~\citep{qwen2025qwen25technicalreport}, and Llama2-13B~\citep{touvron2023llama}. The instruction-following experiments use the corresponding instruction-tuned variants: Llama3-8B-Instruct, Mistral-7B-Instruct-v0.1, Qwen2.5-7B-Instruct, and Llama2-13B-Chat. All models are obtained from the HuggingFace Hub and run in FP16 precision. We do not fine-tune any model; all experiments are conducted entirely as inference-time decoding.

\subsection{Licenses}
\label{app:licenses}

Datasets are released under the following licenses:
\begin{itemize}
    \item Natural Questions: Apache-2.0 license
    \item TriviaQA: Apache-2.0 license
    \item HotpotQA: CC BY-SA 4.0 license
    \item TabMWP: CC BY-NC-SA 4.0 license
\end{itemize}
The models we use have the following licenses:
\begin{itemize}
    \item Llama 2: Meta Llama 2 Community License (\url{https://ai.meta.com/llama/license/})
    \item Llama 3: Meta Llama 3 Community License (\url{https://www.llama.com/llama3/license/})
    \item Mistral: Apache-2.0 license
    \item Qwen2.5: Apache-2.0 license
\end{itemize}

\section{More Results in TriState-Bench}
A detailed comparison of additional parameters across all methods for Llama3-8B, Mistral-7B, Qwen2.5-7B and Llama2-13B is presented in Tables \ref{tab:tristate_result_full_1} and \ref{tab:tristate_result_full_2}.

\begin{table*}[p]
\centering
\small
\resizebox{\textwidth}{!}{%
\begin{tabular}{ll*{3}{cc}}
\toprule
\multirow{2}{*}{Model} & \multirow{2}{*}{Method}
& \multicolumn{2}{c}{$\mathcal{S}_{\mathrm{cor}}$}
& \multicolumn{2}{c}{$\mathcal{S}_{\mathrm{res}}$}
& \multicolumn{2}{c}{$\mathcal{S}_{\mathrm{agr}}$} \\
\cmidrule(lr){3-4}
\cmidrule(lr){5-6}
\cmidrule(lr){7-8}
& & EM & F1 & EM & F1 & EM & F1 \\
\midrule

\multirow{14}{*}{\textbf{Llama3-8B}}
& Greedy              & 59.50 & 70.79 & 4.50  & 9.78  & 64.75 & 73.92 \\
& Greedy\_no\_ctx     & 2.00  & 11.33 & \textbf{91.50} & \textbf{94.00} & 91.00 & 93.65 \\
& CAD ($\alpha$=0.25) & 44.50 & 58.00 & 1.50  & 7.68  & 41.50 & 54.62 \\
& CAD ($\alpha$=0.5)  & 33.00 & 49.35 & 1.00  & 7.67  & 30.50 & 45.96 \\
& CAD ($\alpha$=0.75) & 26.75 & 44.72 & 0.75  & 7.19  & 24.50 & 40.49 \\
& CAD ($\alpha$=1.0)  & 21.75 & 41.07 & 0.75  & 7.11  & 21.25 & 37.92 \\
& COIECD              & 30.25 & 47.35 & 1.75  & 6.29  & 35.00 & 50.99 \\
& AdaCAD              & 51.50 & 64.19 & 3.25  & 9.67  & 48.50 & 60.43 \\
& CoCoA               & 31.25 & 47.88 & 1.00  & 7.68  & 30.00 & 45.52 \\
& Simple Interp ($\tau$=0.25) & 10.50 & 20.31 & 71.00 & 75.83 & \textbf{92.50} & \textbf{95.02} \\
& Simple Interp ($\tau$=0.5)  & 40.75 & 52.66 & 42.50 & 48.70 & 89.25 & 92.59 \\
& Simple Interp ($\tau$=0.75) & 70.50 & 79.25 & 17.75 & 24.20 & 81.75 & 87.01 \\
& \cellcolor{gg}\textbf{ARR (Ours)} & \textbf{75.00} & \textbf{82.68} & 33.25 & 38.59 & 76.75 & 83.15 \\
\midrule

\multirow{14}{*}{\textbf{Mistral-7B}}
& Greedy              & \textbf{91.25} & \textbf{94.75} & 5.25  & 12.99 & 82.75 & 87.76 \\
& Greedy\_no\_ctx     & 0.00  & 8.96  & \textbf{71.75} & \textbf{81.10} & 71.75 & 81.10 \\
& CAD ($\alpha$=0.25) & 90.75 & 94.14 & 1.75  & 9.68  & \textbf{83.75} & \textbf{87.95} \\
& CAD ($\alpha$=0.5)  & 87.00 & 91.08 & 0.75  & 8.34  & 73.25 & 75.91 \\
& CAD ($\alpha$=0.75) & 79.75 & 85.47 & 0.50  & 6.75  & 47.00 & 49.14 \\
& CAD ($\alpha$=1.0)  & 70.50 & 78.48 & 0.25  & 5.47  & 25.00 & 27.51 \\
& COIECD              & 85.00 & 89.46 & 1.00  & 8.61  & 72.75 & 75.56 \\
& AdaCAD              & 91.00 & 94.25 & 2.25  & 10.06 & 83.25 & 87.95 \\
& CoCoA               & 55.50 & 70.10 & 0.50  & 6.98  & 32.00 & 48.19 \\
& Simple Interp ($\tau$=0.25) & 2.25  & 14.20 & 45.00 & 60.01 & 75.50 & 83.36 \\
& Simple Interp ($\tau$=0.5)  & 20.50 & 34.15 & 28.50 & 43.44 & 79.75 & 85.99 \\
& Simple Interp ($\tau$=0.75) & 66.00 & 74.97 & 15.75 & 27.14 & 80.75 & 86.45 \\
& \cellcolor{gg}\textbf{ARR (Ours)} & 88.00 & 92.73 & 22.50 & 30.66 & 82.00 & 87.46 \\

\bottomrule
\end{tabular}%
}
\caption{Performance of all methods on the three TriState-Bench subsets using Llama3-8B and Mistral-7B. $\mathcal{S}_{\mathrm{cor}}$ (Correction): gold context, prior incorrect; $\mathcal{S}_{\mathrm{res}}$ (Resistance): corrupted context, prior correct; $\mathcal{S}_{\mathrm{agr}}$ (Agreement): gold context, prior correct. ARR consistently dominates the resistance subset $\mathcal{S}_{\mathrm{res}}$ where all baselines collapse, while staying competitive on $\mathcal{S}_{\mathrm{cor}}$ and $\mathcal{S}_{\mathrm{agr}}$. Bold marks the best value within each model block.}
\label{tab:tristate_result_full_1}
\end{table*}

\begin{table*}[p]
\centering
\small
\resizebox{\textwidth}{!}{%
\begin{tabular}{ll*{3}{cc}}
\toprule
\multirow{2}{*}{Model} & \multirow{2}{*}{Method}
& \multicolumn{2}{c}{$\mathcal{S}_{\mathrm{cor}}$}
& \multicolumn{2}{c}{$\mathcal{S}_{\mathrm{res}}$}
& \multicolumn{2}{c}{$\mathcal{S}_{\mathrm{agr}}$} \\
\cmidrule(lr){3-4}
\cmidrule(lr){5-6}
\cmidrule(lr){7-8}
& & EM & F1 & EM & F1 & EM & F1 \\
\midrule

\multirow{14}{*}{\textbf{Qwen2.5-7B}}
& Greedy              & \textbf{89.50} & \textbf{93.32} & 1.00  & 10.04 & 86.00 & 90.76 \\
& Greedy\_no\_ctx     & 0.00  & 8.76  & \textbf{74.25} & \textbf{81.73} & 74.25 & 81.73 \\
& CAD ($\alpha$=0.25) & 77.50 & 84.72 & 0.50  & 8.46  & 73.50 & 83.45 \\
& CAD ($\alpha$=0.5)  & 58.50 & 71.46 & 0.25  & 7.24  & 60.25 & 74.26 \\
& CAD ($\alpha$=0.75) & 38.50 & 57.62 & 0.25  & 6.75  & 49.25 & 67.73 \\
& CAD ($\alpha$=1.0)  & 25.75 & 49.65 & 0.25  & 6.30  & 36.25 & 56.18 \\
& COIECD              & 65.75 & 76.63 & 0.50  & 7.19  & 56.75 & 71.96 \\
& AdaCAD              & 81.25 & 87.50 & 0.75  & 8.94  & 82.25 & 88.56 \\
& CoCoA               & 26.50 & 46.41 & 0.25  & 6.19  & 30.75 & 50.60 \\
& Simple Interp ($\tau$=0.25) & 6.00  & 20.60 & 36.50 & 53.05 & 81.25 & 86.92 \\
& Simple Interp ($\tau$=0.5)  & 31.25 & 48.41 & 12.50 & 29.81 & 85.50 & 90.47 \\
& Simple Interp ($\tau$=0.75) & 73.00 & 80.68 & 3.50  & 15.98 & 87.00 & 91.23 \\
& \cellcolor{gg}\textbf{ARR (Ours)} & 85.75 & 91.09 & 15.75 & 22.92 & \textbf{88.25} & \textbf{92.27} \\
\midrule

\multirow{14}{*}{\textbf{Llama2-13B}}
& Greedy              & \textbf{86.50} & \textbf{90.79} & 1.75  & 9.58  & 93.75 & 95.83 \\
& Greedy\_no\_ctx     & 0.00  & 8.16  & \textbf{94.25} & \textbf{96.50} & \textbf{94.25} & \textbf{96.50} \\
& CAD ($\alpha$=0.25) & 85.25 & 89.85 & 1.75  & 9.63  & 92.00 & 94.61 \\
& CAD ($\alpha$=0.5)  & 80.25 & 85.44 & 1.50  & 9.13  & 85.50 & 89.46 \\
& CAD ($\alpha$=0.75) & 73.50 & 80.52 & 1.00  & 8.45  & 78.00 & 84.33 \\
& CAD ($\alpha$=1.0)  & 66.00 & 75.74 & 0.50  & 7.75  & 72.50 & 80.35 \\
& COIECD              & 80.00 & 84.91 & 1.00  & 8.62  & 84.00 & 88.25 \\
& AdaCAD              & 85.25 & 89.83 & 1.75  & 9.65  & 93.25 & 95.40 \\
& CoCoA               & 80.25 & 85.44 & 1.50  & 9.13  & 85.50 & 89.46 \\
& Simple Interp ($\tau$=0.25) & 11.75 & 18.25 & 62.75 & 66.74 & 94.25 & 96.08 \\
& Simple Interp ($\tau$=0.5)  & 38.50 & 43.86 & 29.25 & 33.25 & 93.75 & 95.81 \\
& Simple Interp ($\tau$=0.75) & 71.50 & 75.91 & 11.25 & 17.90 & 94.00 & 95.99 \\
& \cellcolor{gg}\textbf{ARR (Ours)} & 84.00 & 89.12 & 17.00 & 23.17 & 93.75 & 95.81 \\

\bottomrule
\end{tabular}%
}
\caption{Performance of all methods on the three TriState-Bench subsets using Qwen2.5-7B and Llama2-13B. $\mathcal{S}_{\mathrm{cor}}$ (Correction): gold context, prior incorrect; $\mathcal{S}_{\mathrm{res}}$ (Resistance): corrupted context, prior correct; $\mathcal{S}_{\mathrm{agr}}$ (Agreement): gold context, prior correct. ARR consistently dominates the resistance subset $\mathcal{S}_{\mathrm{res}}$ where all baselines collapse, while staying competitive on $\mathcal{S}_{\mathrm{cor}}$ and $\mathcal{S}_{\mathrm{agr}}$. Bold marks the best value within each model block.}
\label{tab:tristate_result_full_2}
\end{table*}

\section{Instruction-tuned LLMs Experiments}
\label{app:instruct-experiments}

We replicate the main evaluation on four instruction-tuned backbones---Llama-3-8B-Instruct, Mistral-7B-Instruct, Qwen2.5-7B-Instruct, and Llama-2-13B-Instruct (Tables~\ref{tab:qa_result_ins} and~\ref{tab:tristate_result_instruct}).

\paragraph{Direction-consistent gain on $\mathcal{S}_{\mathrm{res}}$.}
ARR is the only method that pushes $\mathcal{S}_{\mathrm{res}}$ EM into double digits across all four instruct backbones; every baseline remains below $2.50$ EM on this subset.
On Mistral-Instruct, ARR moves $\mathcal{S}_{\mathrm{res}}$ from $2.50$ to $26.50$ EM ($13.93 \to 37.74$ F1); on Llama-3-Instruct from $0.75$ to $24.50$ EM ($11.10 \to 35.68$ F1); on Llama-2-13B-Instruct from $0.50$ to $11.00$ EM ($8.90 \to 23.60$ F1).
On Qwen2.5-Instruct, where EM is non-comparable (see below), ARR raises $\mathcal{S}_{\mathrm{res}}$ F1 from $7.74$ to $10.75$.
This mirrors the base-model finding: ARR's advantage concentrates on the resistance subset where existing methods uniformly collapse.

\paragraph{TriState-Bench aggregate: two wins, two losses.}
ARR achieves a net TriState-Bench EM gain of $+5.75$ over the strongest baseline on both Llama-3-Instruct ($66.75$ vs.\ Greedy $61.00$) and Mistral-Instruct ($60.92$ vs.\ AdaCAD $55.17$).
On Llama-2-13B-Instruct, however, ARR loses $6.00$ EM to Greedy ($44.17$ vs.\ $50.17$): the $\mathcal{S}_{\mathrm{cor}}$ cost ($-17.8$ EM) outweighs the $\mathcal{S}_{\mathrm{res}}$ gain ($+10.5$ EM).
We report this as a limitation rather than suppressing it; the trade-off is inherent when $\tau < 1$ over-corrects on a backbone whose prior is weaker than its context-reading ability.

\paragraph{Qwen2.5-Instruct collapses EM, not F1.}
Every method on Qwen2.5-7B-Instruct produces EM $\le 2.83$. The model reformats every answer into a full explanatory sentence (e.g., ``\emph{Cecilia Payne-Gaposchkin determined that stars are composed primarily of hydrogen and helium}''), so normalized first-line EM cannot match the gold span. Token-level F1 remains well-defined and ranks methods consistently with other backbones: ARR leads all baselines on $\mathcal{S}_{\mathrm{res}}$ F1 by $3$ points. We therefore rely on F1 for this backbone.

\paragraph{Traditional QA.}
ARR's traditional-QA performance is mixed across instruct backbones.
On Llama-3-Instruct, ARR achieves the best overall average ($33.33$ EM) and wins 3 of 4 datasets (TabMWP, HotpotQA, TriviaQA), losing NQ by $6$ EM.
On Mistral-Instruct, Greedy leads overall ($38.42$ vs.\ ARR $38.02$ EM); ARR matches or exceeds baselines only on TriviaQA.
On Llama-2-13B-Instruct, ARR wins TabMWP and HotpotQA but loses NQ and TriviaQA, trailing Greedy by $2.35$ EM on average.
Unlike the base-model setting, ARR does not uniformly dominate traditional QA when applied to instruct models.

\paragraph{Why the gain is narrower than on base models.}
Instruction tuning saturates the $\mathcal{S}_{\mathrm{cor}}$ and $\mathcal{S}_{\mathrm{agr}}$ subsets: all baselines already read context faithfully when context is correct, leaving little headroom for decoding-time intervention. The only subset where pulling toward the prior ($\tau < 1$) retains measurable effect is $\mathcal{S}_{\mathrm{res}}$, where the model must override a wrong context using parametric knowledge. ARR's directional gain therefore concentrates there, and the high baseline on $\mathcal{S}_{\mathrm{cor}}$/$\mathcal{S}_{\mathrm{agr}}$ dilutes the overall average improvement.

\begin{table*}[t]
\centering
\scriptsize
\setlength{\tabcolsep}{3.2pt}
\renewcommand{\arraystretch}{1.08}
\resizebox{\textwidth}{!}{%
\begin{tabular}{ll*{6}{cc}}
\toprule
\multirow{2}{*}{Model} & \multirow{2}{*}{Method}
& \multicolumn{2}{c}{NQ}
& \multicolumn{2}{c}{TabMWP}
& \multicolumn{2}{c}{HotpotQA}
& \multicolumn{2}{c}{TriviaQA}
& \multicolumn{2}{c}{TriState}
& \multicolumn{2}{c}{\textbf{Avg.}} \\
\cmidrule(lr){3-4}
\cmidrule(lr){5-6}
\cmidrule(lr){7-8}
\cmidrule(lr){9-10}
\cmidrule(lr){11-12}
\cmidrule(lr){13-14}
& & EM & F1 & EM & F1 & EM & F1 & EM & F1 & EM & F1 & EM & F1 \\
\midrule

\multirow{6}{*}{\textbf{Llama3-8B-Inst}}
& Greedy & 31.08 & 50.80 & 14.20 & 19.25 & 8.70  & 22.94 & 37.00 & 47.52 & 61.00 & 66.96 & 30.40 & 41.49 \\
& CAD    & 21.17 & 41.57 & 3.40  & 7.39  & 1.34  & 12.25 & 8.85  & 18.81 & 38.08 & 50.17 & 14.57 & 26.04 \\
& COIECD & \textbf{32.93} & \textbf{52.32} & 4.80  & 11.08 & 4.77  & 18.12 & 26.10 & 36.92 & 58.75 & 65.58 & 25.47 & 36.80 \\
& AdaCAD & 31.86 & 51.26 & 10.60 & 16.24 & 6.70  & 20.40 & 30.15 & 40.30 & 60.17 & 66.41 & 27.90 & 38.92 \\
& CoCoA  & 32.11 & 51.32 & 6.20  & 11.35 & 3.70  & 16.12 & 18.20 & 29.36 & 56.92 & 64.52 & 23.43 & 34.53 \\
& \cellcolor{gg}\textbf{ARR(Ours)} & 24.98 & 44.80 & \textbf{16.70} & \textbf{21.39} & \textbf{10.22} & \textbf{24.39} & \textbf{48.00} & \textbf{59.83} & \textbf{66.75} & \textbf{73.73} &  \cellcolor{gg}\textbf{33.33} &  \cellcolor{gg}\textbf{44.83} \\
\midrule

\multirow{6}{*}{\textbf{Mistral-7B-Inst}}
& Greedy & 35.45 & 53.25 & \textbf{34.50} & \textbf{39.21} & \textbf{20.63} & \textbf{34.76} & 47.00 & 57.44 & 54.50 & 62.81 & \textbf{38.42} & \textbf{49.49} \\
& CAD    & 26.12 & 45.54 & 21.90 & 28.27 & 10.30 & 22.23 & 22.70 & 34.70 & 44.92 & 54.92 & 25.19 & 37.13 \\
& COIECD & 33.85 & 52.57 & 31.50 & 35.80 & 17.93 & 31.84 & 39.50 & 51.69 & 51.33 & 60.28 & 34.82 & 46.44 \\
& AdaCAD & \textbf{36.71} & \textbf{54.30} & 33.90 & 38.29 & 19.78 & 33.70 & 45.10 & 56.08 & 55.17 & 63.20 & 38.13 & 49.11 \\
& CoCoA  & 31.14 & 50.09 & 30.40 & 34.63 & 10.01 & 24.20 & 27.10 & 40.01 & 0.25  & 19.28 & 19.78 & 33.64 \\
& \cellcolor{gg}\textbf{ARR(Ours)} & 31.43 & 48.26 & 30.40 & 36.59 & 19.86 & 33.86 & \textbf{47.50} & \textbf{57.66} & \textbf{60.92} & \textbf{69.57} &  \cellcolor{gg}38.02 &  \cellcolor{gg}49.19 \\
\midrule

\multirow{6}{*}{\textbf{Qwen2.5-7B-Inst}}
& Greedy & 0.88  & 18.18 & 1.50  & 4.43  & 0.41  & 9.62  & \textbf{3.15}  & 12.51 & 1.25  & \textbf{21.01} & 1.44  & 13.15 \\
& CAD    & \textbf{4.59}  & 20.33 & 3.80  & 8.21  & 0.32  & 8.97  & 1.25  & 9.57  & \textbf{2.83}  & 17.77 & 2.56  & 12.97 \\
& COIECD & 1.32  & 17.95 & 5.80  & 9.16  & 0.34  & 9.85  & 2.50  & 11.40 & 2.42  & 19.48 & 2.48  & 13.57 \\
& AdaCAD & 1.92  & 19.02 & 5.90  & 9.26  & 0.30  & 9.88  & 2.25  & 11.22 & 2.25  & 20.10 & 2.52  & 13.90 \\
& CoCoA  & 3.64  & \textbf{20.55} & \textbf{7.50}  & \textbf{11.19} & 0.30  & 9.68  & 1.35  & 10.07 & 1.50  & 18.45 & \textbf{2.86}  & \textbf{13.99} \\
& \cellcolor{gg}\textbf{ARR(Ours)} & 0.50  & 16.02 & 2.10  & 4.24  & \textbf{0.62}  & \textbf{10.07} & 2.20  & \textbf{13.36} & 0.92  & 20.47 &  \cellcolor{gg}1.27  &  \cellcolor{gg}12.83 \\
\midrule

\multirow{6}{*}{\textbf{Llama2-13B-Inst}}
& Greedy & 20.57 & 41.36 & 6.80  & 13.87 & 7.98  & 20.27 & \textbf{37.10} & \textbf{48.96} & \textbf{50.17} & \textbf{58.93} & \textbf{24.52} & \textbf{36.68} \\
& CAD    & 13.87 & 33.32 & 2.20  & 7.86  & 1.98  & 10.84 & 12.00 & 25.26 & 25.13 & 43.60 & 11.04 & 24.18 \\
& COIECD & \textbf{22.39} & \textbf{42.86} & 3.60  & 9.53  & 6.25  & 17.46 & 32.30 & 44.68 & 49.63 & 58.50 & 22.83 & 34.61 \\
& AdaCAD & \textbf{22.39} & 42.48 & 6.50  & 12.53 & 7.58  & 18.86 & 35.90 & 47.63 & 50.07 & \textbf{58.93} & 24.49 & 36.09 \\
& CoCoA  & 3.18  & 26.93 & 3.00  & 8.16  & 1.23  & 11.82 & 6.80  & 24.18 & 7.50  & 37.27 & 4.34  & 21.67 \\
& \cellcolor{gg}\textbf{ARR(Ours)} & 14.53 & 34.16 & \textbf{8.20}  & \textbf{14.58} & \textbf{8.04}  & \textbf{20.64} & 35.90 & 48.27 & 44.17 & 57.07 &  \cellcolor{gg}22.17 &  \cellcolor{gg}34.94 \\

\bottomrule
\end{tabular}%
}
\caption{Performance comparison on Instruct models. Each benchmark reports EM and F1. The \textbf{Avg.}\ column is the arithmetic mean over all five benchmarks (NQ, TabMWP, HotpotQA, TriviaQA, and TriState). Bold marks the best value within each model block.}
\label{tab:qa_result_ins}
\end{table*}
\begin{table*}[t]
\centering
\scriptsize
\setlength{\tabcolsep}{5pt}
\renewcommand{\arraystretch}{1.08}
\resizebox{0.78\textwidth}{!}{%
\begin{tabular}{ll*{3}{cc}}
\toprule
\multirow{2}{*}{Model} & \multirow{2}{*}{Method}
& \multicolumn{2}{c}{$\mathcal{S}_{\mathrm{cor}}$}
& \multicolumn{2}{c}{$\mathcal{S}_{\mathrm{res}}$}
& \multicolumn{2}{c}{$\mathcal{S}_{\mathrm{agr}}$} \\
\cmidrule(lr){3-4}
\cmidrule(lr){5-6}
\cmidrule(lr){7-8}
& & EM & F1 & EM & F1 & EM & F1 \\
\midrule

\multirow{6}{*}{\textbf{Llama3-8B-Inst}}
& Greedy & \textbf{92.50} & \textbf{95.96} & 0.75  & 11.10 & \textbf{89.75} & \textbf{93.82} \\
& CAD    & 60.75 & 75.38 & 0.25  & 9.55  & 53.25 & 65.57 \\
& COIECD & 90.25 & 93.75 & 0.25  & 11.33 & 85.75 & 91.66 \\
& AdaCAD & 91.75 & 95.13 & 0.25  & 10.88 & 88.50 & 93.22 \\
& CoCoA  & 87.25 & 92.06 & 0.25  & 10.81 & 83.25 & 90.70 \\
&  \cellcolor{gg}\textbf{ARR(Ours)} & 86.75 & 91.75 & \textbf{24.50} & \textbf{35.68} & 89.00 & 93.77 \\
\midrule

\multirow{6}{*}{\textbf{Mistral-7B-Inst}}
& Greedy & 81.00 & 87.37 & 2.50  & 13.93 & \textbf{80.00} & 87.12 \\
& CAD    & 75.25 & 83.26 & 0.25  & 10.61 & 59.25 & 70.90 \\
& COIECD & 80.75 & 87.33 & 1.00  & 11.83 & 72.25 & 81.68 \\
& AdaCAD & \textbf{84.75} & \textbf{89.98} & 1.50  & 12.79 & 79.25 & 86.82 \\
& CoCoA  & 0.50  & 28.03 & 0.00  & 5.66  & 0.25  & 24.16 \\
&  \cellcolor{gg}\textbf{ARR(Ours)} & 76.50 & 83.66 & \textbf{26.50} & \textbf{37.74} & 79.75 & \textbf{87.30} \\
\midrule

\multirow{6}{*}{\textbf{Qwen2.5-7B-Inst}}
& Greedy & 1.25  & 27.81 & 0.00  & 7.74  & 2.50  & \textbf{27.49} \\
& CAD    & \textbf{6.50}  & \textbf{27.93} & 0.00  & 5.32  & 2.00  & 20.07 \\
& COIECD & 2.50  & 25.23 & 0.00  & 6.99  & \textbf{4.75}  & 26.24 \\
& AdaCAD & 2.50  & 27.64 & 0.00  & 7.31  & 4.25  & 25.36 \\
& CoCoA  & 2.50  & 27.15 & 0.00  & 6.42  & 2.00  & 21.78 \\
&  \cellcolor{gg}\textbf{ARR(Ours)} & 0.75  & 24.44 & \textbf{0.25}  & \textbf{10.75} & 1.75  & 26.24 \\
\midrule

\multirow{6}{*}{\textbf{Llama2-13B-Inst}}
& Greedy & 77.00 & 85.30 & 0.50  & 8.90  & 73.00 & 82.60 \\
& CAD    & 50.20 & 68.80 & 0.00  & 7.40  & 25.20 & 54.60 \\
& COIECD & 74.20 & 83.70 & 0.20  & 8.70  & \textbf{74.50} & \textbf{83.10} \\
& AdaCAD & \textbf{78.50} & \textbf{86.70} & 0.20  & 8.40  & 71.50 & 81.70 \\
& CoCoA  & 11.50 & 59.80 & 0.00  & 5.50  & 11.00 & 46.50 \\
&  \cellcolor{gg}\textbf{ARR(Ours)} & 59.20 & 72.80 & \textbf{11.00} & \textbf{23.60} & 62.30 & 74.80 \\

\bottomrule
\end{tabular}%
}
\caption{Performance on the three TriState-Bench subsets for Instruct models. $\mathcal{S}_{\mathrm{cor}}$ (Correction): gold context, prior incorrect; $\mathcal{S}_{\mathrm{res}}$ (Resistance): corrupted context, prior correct; $\mathcal{S}_{\mathrm{agr}}$ (Agreement): gold context, prior correct. ARR consistently dominates the resistance subset $\mathcal{S}_{\mathrm{res}}$ where all baselines collapse, while staying competitive on $\mathcal{S}_{\mathrm{cor}}$ and $\mathcal{S}_{\mathrm{agr}}$. Bold marks the best value within each model block.}
\label{tab:tristate_result_instruct}
\end{table*}

\section{Case Study Details}
We complete the full EM-vs-$\tau$ curves in Figure~\ref{fig:threshold_change_full} and the detailed cases is showed on Figure~\ref{fig:detailed_cases}, ~\ref{fig:detailed_failure_cases}.

\begin{figure*}[t]
\centering
  \includegraphics[width=0.8\linewidth]{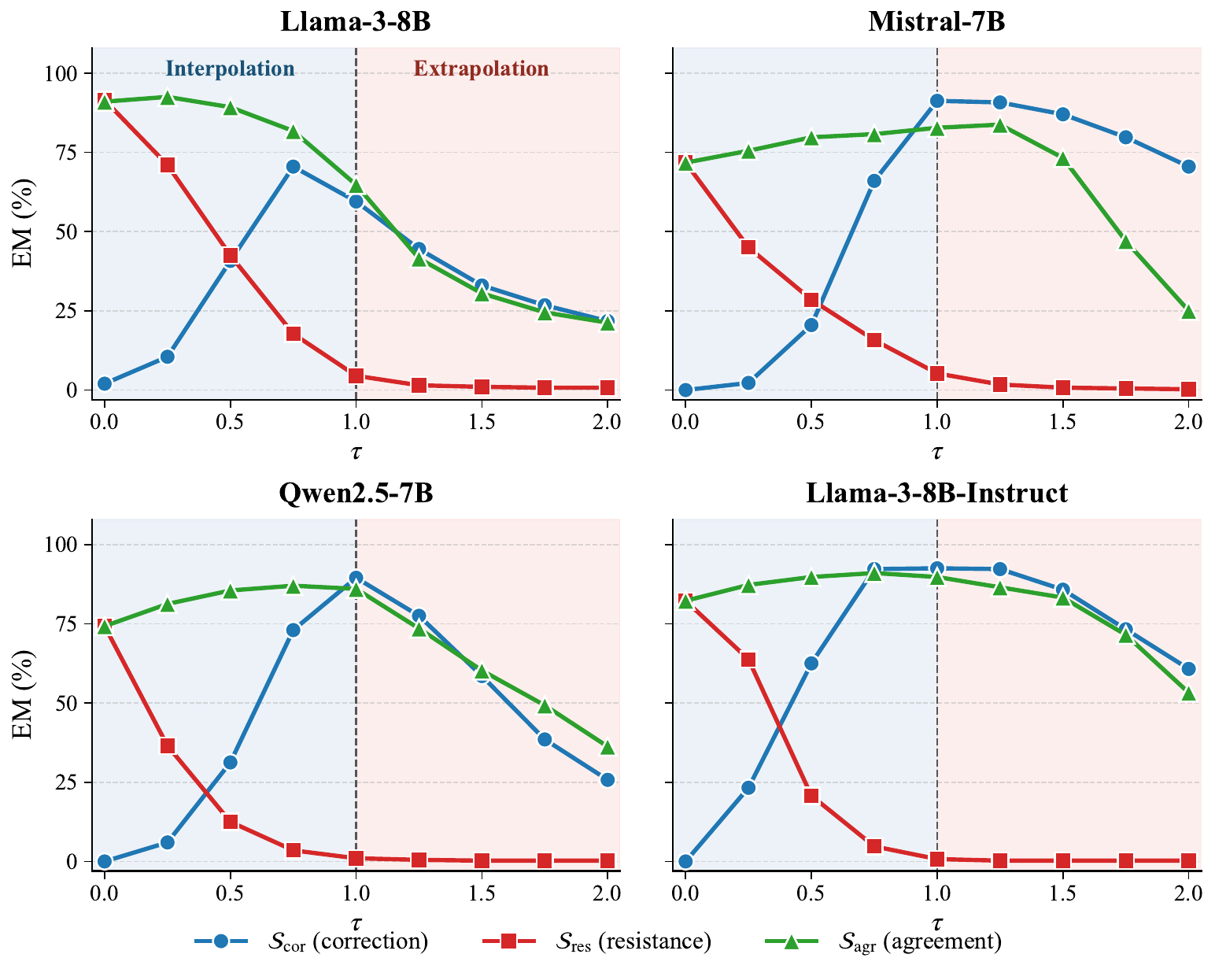}
  \caption{Blue/red shading separates the interpolation ($\tau\in[0,1]$) and extrapolation ($\tau>1$) regimes. $\mathcal{S}_{\mathrm{res}}$ decays monotonically with $\tau$; $\mathcal{S}_{\mathrm{cor}}$ peaks near $\tau\approx 1$ and collapses under extrapolation, with model-dependent severity (sharpest for Llama-3-8B, mildest for Llama-3-8B-Instruct).}
  \label{fig:threshold_change_full}
\end{figure*}

\begin{figure*}[t]
\centering
  \includegraphics[width=\linewidth]{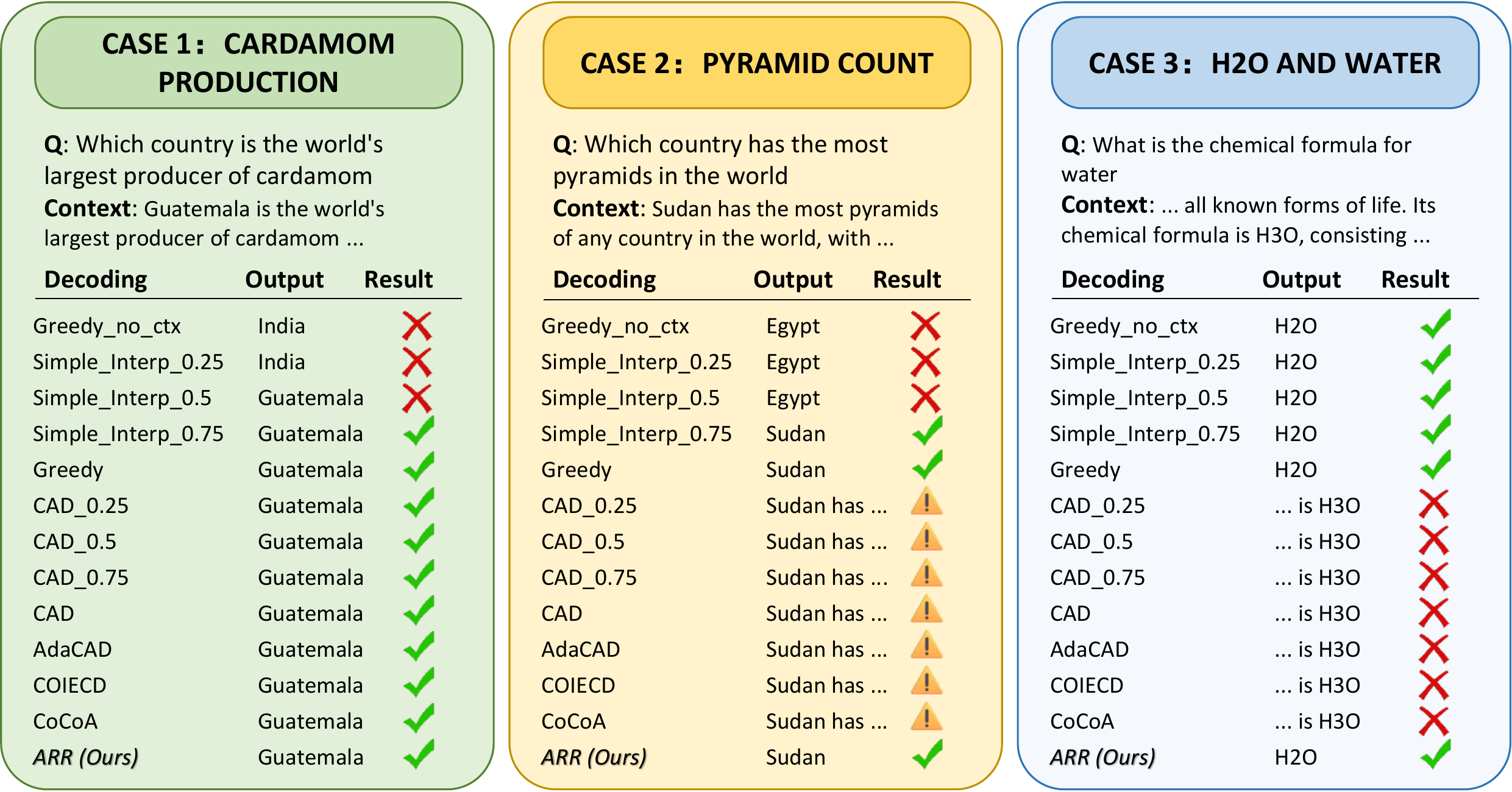}
  \caption{Detailed cases on Llama-3-8B, illustrating how varying $\tau$ affects decoding outputs and exposes the structural failure of extrapolation methods.}
  \label{fig:detailed_cases}
\end{figure*}

\begin{figure*}[t]
\centering
  \includegraphics[width=\linewidth]{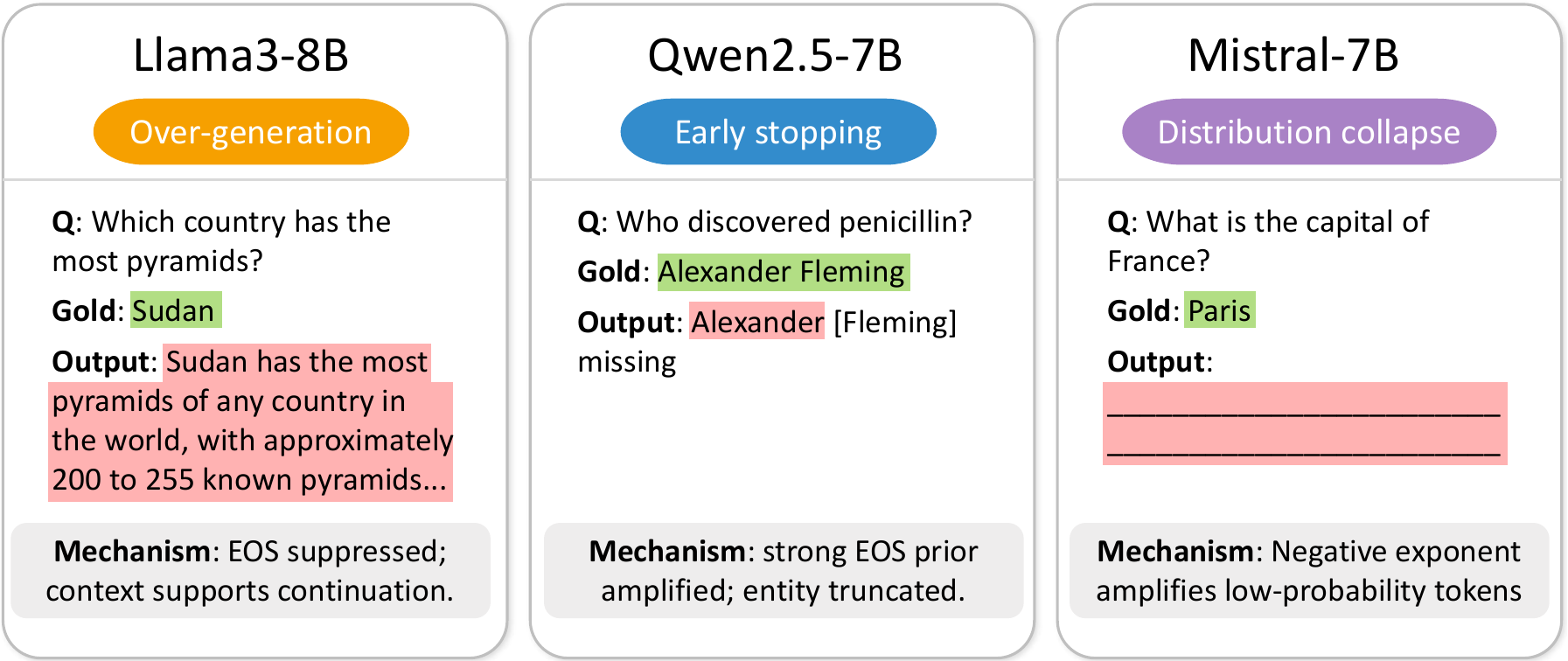}
  \caption{The detailed failure cases in TriState-Bench on Llama-3-8B, Qwen2.5-7B, and Mistral-7B.}
  \label{fig:detailed_failure_cases}
\end{figure*}

\section{Prompts}
\label{app:prompts}
We use the QA prompt template shown in Table~\ref{tab:prompt_template}.

\begin{table}[t]
\centering
\small
\setlength{\tabcolsep}{6pt}
\renewcommand{\arraystretch}{1.25}
\begin{tabular}{@{}p{0.46\linewidth} p{0.46\linewidth}@{}}

\toprule
\multicolumn{2}{c}{\textbf{Question Answering Prompt Template}} \\
\midrule
\multicolumn{1}{c}{\textit{With Context}} & \multicolumn{1}{c}{\textit{Without Context}} \\
\midrule
Using only the references listed below, answer the following question. \newline\newline
\textbf{Context}: \{context\} \newline
\textbf{Question}: \{question\}? \newline
\textbf{Answer}:
&
Answer the following question. \newline\newline
\textbf{Question}: \{question\}? \newline
\textbf{Answer}: \\
\bottomrule
\end{tabular}
\caption{Prompt templates}
\label{tab:prompt_template}
\end{table}

\end{CJK}
\end{document}